\definecolor{tblblue}{RGB}{78,121,167}
\definecolor{tblsky}{RGB}{160,203,232}
\definecolor{tblorange}{RGB}{242,142,43}
\definecolor{tblapricot}{RGB}{255,190,125}
\definecolor{tblforest}{RGB}{89,161,79}
\definecolor{tblapple}{RGB}{140,209,125}
\definecolor{tblmustard}{RGB}{182,153,45}
\definecolor{tblyellow}{RGB}{241,206,99}
\definecolor{tblteal}{RGB}{73,152,148}
\definecolor{tblweirdgreen}{RGB}{134,188,182}
\definecolor{tblred}{RGB}{225,87,89}
\definecolor{tblflamingo}{RGB}{255,157,154}
\definecolor{tblgray}{RGB}{121,112,110}
\definecolor{tbllightgray}{RGB}{186,176,172}
\definecolor{tblpink}{RGB}{211,114,149}
\definecolor{tblrose}{RGB}{250,191,210}
\definecolor{axisgray}{RGB}{39,39,42}
\newtheorem{theorem}{Theorem}[section]
\newtheorem{definition}[theorem]{Definition}
\newtheorem{lemma}[theorem]{Lemma}
      \def\labelstyle{\itshape}
    \def\input{
  \item[\labelstyle Input:]}%
\title{Embracing Discrete Search: A Reasonable \\ Approach to Causal Structure Learning}
\author{Marcel Wienöbst$^1$, Leonard Henckel$^2$, Sebastian Weichwald\thanks{Correspondence: \texttt{sweichwald@math.ku.dk}}\hspace{0.13cm}$^{,3}$ \\ [0.2em]
  $^1$University of Lübeck, $^2$University College Dublin, $^3$University of Copenhagen
}
\begin{document}

\maketitle

\begin{abstract}
We present FLOP (Fast Learning of Order and Parents), a score-based causal
discovery algorithm for linear models. It pairs fast parent selection with
iterative Cholesky-based score updates, cutting run-times over prior
algorithms. This makes it feasible to fully embrace discrete search, enabling
iterated local search with principled order initialization to find graphs with
scores at or close to the global optimum. The resulting structures are highly
accurate across benchmarks, with near-perfect recovery in standard settings.
This performance calls for revisiting discrete search over graphs as a
reasonable approach to causal discovery.
\end{abstract}

\section{Introduction}

Learning about the directed acyclic graph (DAG) underlying a system's
data-generating process from observational data under causal sufficiency is a
fundamental causal discovery task~\citep{pearl2009causality}. Score-based
algorithms address this task by assigning penalized likelihood scores to each
DAG and seeking graphs whose scores are optimal. Identifiability theory asks
when such score-optimal graphs identify the target graph (or its equivalence
class) in the infinite-sample limit, with various results under different
assumptions and scores~\citep{chickering2002optimal,nandy2018high}.

Exact algorithms, that are guaranteed to find a score-optimal graph, have
exponential run-time and are feasible up to roughly 30
variables~\citep{koivisto2004exact,silander2006simple}. For larger graphs,
local search must be employed, which evaluates neighbouring graphs to find
graphs with better scores; canonical moves for this hill climbing are single
edge insertions, deletions, or reversals~\citep{heckerman1995learning}. In the
sample limit, greedy discrete search with a neighbourhood notion that respects
score equivalence provably finds a graph with optimal
score~\citep{chickering2002optimal}. In finite samples, scores are inexact and
local search may get stuck in local optima or, as we demonstrate, even find
graphs with better scores than the true graph. Finite-sample performance is a
practical challenge, despite the mature identifiability theory and asymptotic
guarantees.

Continuous optimization methods have emerged as a popular alternative. For
example, NOTEARS encodes acyclicity as a smooth constraint and optimizes a
surrogate objective~\citep{zheng2018dags}, with many
follow-ups~\citep{bello2022dagma,rolland22a}. Their supposed advantages have
been questioned empirically and
conceptually~\citep{reisach2021beware,reisach2023sorting,ng2024structure}.
Further, NP-hardness results often cited to dismiss discrete search do not
apply to the commonly considered discovery settings: The standard hardness
constructions rely on data-generating processes that involve unobserved
variables and cannot be represented by a DAG over only the observed
variables~\citep{chickering1996learning,chickering2004large}. When the
distribution is representable by a sparse DAG, discrete procedures
asymptotically recover the target graph with polynomially many independence
tests or score
evaluations~\citep{claassen2013learning,chickering2015selective}.

One of the core issues of score-based methods in practice are
finite-sample induced local optima~\citep{nielsen2012local}. Hence, the best-performing
heuristics in benchmarks~\citep{rios2021benchpress} are either able to escape
local optima, for example through simulated
annealing~\citep{kuipers2022efficient}, or realize larger
neighborhoods~\citep{pisinger2018large}, such as recent order-based
methods~\citep{lam2022greedy,andrews2023fast} with effective reinsertion moves
rather than only swapping neighboring
nodes~\citep{teyssier2012ordering,scanagatta2015learning}. This helps explain
the strong performance of the order-based BOSS
algorithm~\citep{andrews2023fast} and more recent order-based local
searches~\citep{pmlr-v267-li25bx} on common causal discovery benchmarks.
Continuous relaxations also alter the search space traversal, yet they have not
matched this performance and introduce additional challenges, for example
optimization complexity, convergence issues, edge thresholding, and having to
resort to surrogate objectives.

\begin{figure}
  \centering
  \begin{tikzpicture}[scale=0.548]
    \input{img/default_shd.tikz}
    \node[font=\small] at (0.55928969005743869, 1.30058963282937365) {$\text{FLOP}_0$};
    \node[font=\small] at (2.339726785980697, 0.68) {$\text{FLOP}_{20}$};
    \node[font=\small] at (5.252739895528542, 0.6582937365010799) {$\text{FLOP}_{100}$};
    \node[font=\small] at (7.16520515335865, 0.99) {BOSS};
    \node[font=\small] at (3.381505664393822, 2.1546652267818578) {PC};
    \node[font=\small] at (6.289976764476, 5.043952483801296) {GES};
    \node[font=\small] at (5.977417823969677, 2.220043196544277) {DAGMA};
  \end{tikzpicture}
  \begin{tikzpicture}[scale=0.548]
    \input{img/default_aid.tikz}
    \node[font=\small] at (0.55928969005743869, 1.80058963282937365) {$\text{FLOP}_0$};
    \node[font=\small] at (2.539726785980697, 1.0) {$\text{FLOP}_{20}$};
    \node[font=\small] at (5.022739895528542, 0.5082937365010799) {$\text{FLOP}_{100}$};
    \node[font=\small] at (6.16520515335865, 1.4) {BOSS};
    \node[font=\small] at (3.381505664393822, 3.7746652267818578) {PC};
    \node[font=\small] at (5.589976764476, 5.043952483801296) {GES};
    \node[font=\small] at (7.977417823969677, 6.860043196544277) {DAGMA};
  \end{tikzpicture}
  \caption{Run-time plotted against Structural Hamming Distance (left) and Ancestor Adjustment Identification Distance~\citep{henckeladjustment} (right) between the CPDAGs learned on linear ANM data and the target CPDAG corresponding to the underlying Erdős-Renyi generated DAG with 50 nodes, average degree 8 and 1000 samples drawn. Every point corresponds to one of 50 random instances; diamonds indicate averages. FLOP variants differ in the number of ILS restarts to escape local optima. The fraction of instances with exact CPDAG recovery is 40\% for BOSS and $\text{FLOP}_0$ and 60\% for $\text{FLOP}_{20}$ and $\text{FLOP}_{100}$, and zero for the remaining algorithms.}
  \label{figure:default:plot}
\end{figure}

\paragraph{Contributions.} We introduce FLOP (Fast Learning of Order and
Parents), a score-based structure learning algorithm for linear additive noise
models that fully embraces discrete search, and offer a Rust implementation at \href{https://github.com/CausalDisco/flopsearch}{github.com/CausalDisco/flopsearch} ready-to-use from Python via \texttt{pip install flopsearch}. The FLOP
algorithm adopts reinsertion- and order-based exploration of
DAGs~\citep{andrews2023fast} and adds four components that enable aggressive
search for graphs with optimal BIC score. First, we simplify parent selection
by re-initializing from the parent sets learned for the previous order, which
reduces compute and memory cost without degrading performance
(Section~\ref{sec:parentselection}). Second, we accelerate score computations
for the linear Gaussian BIC via efficient iterative updates of Cholesky
factorizations, which amortize cost across local moves
(Section~\ref{sec:cholesky}). Third, we develop a principled initialization
that, compared with a random initial order, reduces local parent selection
failures on ancestor-descendant pairs that are far apart and only weakly
dependent (Section~\ref{sec:initialorder}). Fourth, the computational gains
allow us to employ an iterated local search (ILS) metaheuristic to escape local optima (Section~\ref{sec:ils}). On standard
benchmarks, the order-based methods BOSS and FLOP achieve strong accuracy at
favorable run-time (see Figure~\ref{figure:default:plot}). FLOP's run-time
advantage can be translated into higher accuracy by extending the ILS budget,
with $\text{FLOP}_k$ denoting $k$ iterations. By treating compute budget as a
hyperparameter, our work highlights the link between run-time and finite‑sample
accuracy in causal discovery.

\section{Preliminaries}
We consider the problem of learning about the acyclic graph structure
underlying linear additive noise models (ANMs) from observational data. For a
causal DAG $G=(V,E)$ with node set $V = \{ 1, \dots, p\}$ and edge set $E
\subsetneq \{ i \rightarrow j \; \mid \; i,j \in V \}$, a linear ANM is defined
by a weight matrix $W \in \mathbb{R}^{p \times p}$ with $W_{i,j} \neq 0 \iff
i\to j \in E$ and a vector $N = [N_1,...,N_p]^T$ of jointly independent,
real-valued, zero-mean noise variables with finite fourth moment; the observed
variables $X = [X_1,...,X_p]^T$ are then defined by $X = W^T X + N$.

We denote the parents of $v\in V$, that is all $u\in V$ such that $u
\rightarrow v \in E$, by $\text{Pa}(v)$.
%, assuming it is unambiguous from
%context which underlying graph $G=(V,E)$ is referred to. 
Every DAG $G=(V,E)$ can be associated to at least one linear order $\tau$, also
called \emph{topological} order, of the nodes such that $u \rightarrow v \in E$
implies $u$ coming before $v$ in $\tau$. A DAG is called Markovian to a
probability distribution if every variable is independent of its
non-descendants (all nodes not reachable from it with a directed path) given
its parents. We denote conditional independence by $\perp$.

As score to optimize, we choose the Bayesian Information
Criterion~\citep{schwarz1978estimating}, BIC for short, which is the common
choice in score-based structure learning for ANMs and is asymptotically
consistent under the faithfulness assumption~\citep{koller2009probabilistic}, which we employ throughout this paper. For a DAG
$G=(V,E)$ and a data set $D$ containing $n$ observations, it is defined as $k
\cdot \ln(n) - 2 \ln(\hat{L})$ with $k$ being the number of parameters and
$\hat{L}$ the maximized likelihood for the given DAG and data. The score can be
decomposed into local scores $\text{BIC}_D(G, X) = \sum_{v \in V} \ell_D(X_v,
X_{\text{Pa}(v)})$ and, for linear models with Gaussian noise, each local score
is given by $\ell_D(X_v, X_{\text{Pa}(v)}) = n \log(\widehat{\text{Var}}_D(X_v \mid
X_{\text{Pa}(v)})) + \lambda \ln(n)\, |\text{Pa}(v)|$ with $\lambda$ being a
penalty parameter.\footnote{Here, one needs to assume that the empirical covariance matrix is non‑degenerate and numerically well‑conditioned, excluding cases such as zero‑variance noise, high‑dimensional settings with $n < p$, or variance blow‑up severe enough to cause numerical instability.} The BIC is score-equivalent: Its value is the same for
Markov equivalent DAGs~\citep{verma1990equivalence} that imply the same
conditional independencies. In fact, with observational data under causal
sufficiency and without additional assumptions, only such an equivalence class
of DAGs is identifiable. Throughout, our target object is therefore the
equivalence class of the underlying DAG, represented by a completed partially
directed acyclic graph~\citep{andersson1997characterization}. Methods that
internally optimize over DAGs, such as continuous relaxations, or rely on extra
assumptions to identify a unique DAG, such as non-Gaussian linear
models~\citep{shimizu2011directlingam}, or noise-variance
conditions~\citep{park2020identifiability}, need to be evaluated via the
corresponding CPDAG for a fair comparison; evaluating a single DAG as if
identified is arbitrary under our assumptions and can be misleading.

We employ principles from the BOSS algorithm~\citep{andrews2023fast} to
optimize the BIC score over DAGs: We traverse the space of topological orders
of DAGs, iteratively moving to orders that result in better scoring DAGs when
selecting parents accordingly. Candidate orders are generated by taking a
variable and reinserting it at another position. Given an order $\tau$, we use
the grow-shrink procedure~\citep{margaritis2003learning} to construct a parent
set for each variable $v$ from its prefix, the variables preceding it in
$\tau$, and score the resulting DAG. Algorithm~\ref{algorithm:boss} shows the
BOSS reinsertion strategy, which we build on for the improved search in FLOP.

\begin{algorithm}[t]
  \DontPrintSemicolon
  \SetKwInOut{Input}{input}\SetKwInOut{Output}{output}
  \Input{Data set $D$ over $p$ variables.}
  \Output{A CPDAG $G$.}
  \SetKwFunction{FReinsert}{reinsert}
  \SetKwFunction{FFitParents}{growShrink}
  \SetKwFunction{FSum}{sum}
  \SetKwProg{Fn}{function}{}{end}

  \BlankLine

  $\tau \coloneq$ initial order of $\{1, \dots, p\}$ \tcp{Below, $\tau_i$ refers to the $i$-th element of $\tau$}
  \tcc{$P_v$, $\ell_v$ contain the parents and the local score of node $v$}
  \lForEach{$i \in \{1, \dots, p\}$}{
    $(P_{\tau_i}, \ell_{\tau_i}) \coloneq$ \FFitParents($\tau_i$, $\tau_{1:i-1}$, $D$)
  }\label{line:boss:growshrink}

  \BlankLine

  \Repeat{\FSum{$\ell$} $\geq$ \FSum{$\ell_{\text{old}}$}}{
    $(\tau^{\text{old}}, \ell^{\text{old}}) \coloneq (\tau, \ell)$ \;
    % $(\tau^{\text{old}}, P^{\text{old}}, \ell^{\text{old}}) \coloneq (\tau, P, \ell)$ \;
    \For{$v \in \tau^{\text{old}}$}{
      $(\tau, \; P, \; \ell) \coloneq$ \FReinsert{$\tau$, $P$, $\ell$, $v$, $D$} \tcp{Optimal reinsertion for $v$.}
    }
  }

  \BlankLine

  \Return CPDAG of the DAG defined by parent sets $P_1, \dots, P_p$ \;
  \caption{Reinsertion-based local search as proposed by~\cite{andrews2023fast}.}
  \label{algorithm:boss}
\end{algorithm}

\begin{algorithm}[t]
  \DontPrintSemicolon
  \SetKwFunction{FReinsert}{reinsert}
  \SetKwFunction{FFitParents}{growShrink}
  \SetKwFunction{FSum}{sum}
  \SetKwProg{Fn}{function}{}{end}

  \Fn(\tcp*[h]{$P$ stores parents, $\ell$ local scores.}){\FReinsert{$\tau$, $P$, $\ell$, $v$, $D$}}{
    $i \coloneq$ position of $v$ in $\tau$ \;
    $(\hat{\tau}, \hat{P}, \hat{\ell}) \coloneq (\tau, P, \ell)$ \; 
    \ForEach(\tcp*[h]{Test reinsertions at later positions.}){$j \in \{i + 1, \dots, p\}$}{
      $(\tau_{j-1}, \tau_j) \coloneq (\tau_j, \tau_{j-1})$ \tcp*{Swap $\tau_{j-1}$ one position to the right.} 
      \tcp{Compute parents for changed prefixes of $\tau_{j-1}$ and $\tau_j$.}
      $(\ell_{\tau_j}, \; P_{\tau_j}) \coloneq$
      \FFitParents{$\tau_j$, $\tau_{1:j-1}$, $D$, $P_{\tau_j}$, $l_{\tau_j}$,
      $+\tau_{j-1}$} \; 
      $(\ell_{\tau_{j-1}}, \; P_{\tau_{j-1}}) \coloneq$
      \FFitParents{$\tau_{j-1}$, $\tau_{1:j-2}$, $D$, $P_{\tau_{j-1}}$,
      $l_{\tau_{j-1}}$, $-\tau_{j}$} \;
      \lIf{\FSum{$\ell$} $<$ \FSum{$\hat{\ell}$}}{
        $(\hat{\tau}, \hat{P}, \hat{\ell}) \coloneq (\tau, P, \ell)$
      }
    }
    \lForEach(\tcp*[h]{Analogous for earlier positions.}){$j \in \{i - 1, \dots, 1\}$}{\dots}}
    \Return $\hat{\tau}$, $\hat{P}$, $\hat{\ell}$ \;
  \caption{Find the best-scoring reinsertion of node $v$ in order $\tau$ given data $D$.}
  \label{algorithm:reinsert}
\end{algorithm}

\section{Scaling Up Order-Based Search} 
This section presents two speedups for order-based local search with
reinsertion moves, which yield significantly faster run-times than the
grow-shrink trees used in BOSS. FLOP still uses grow-shrink to obtain DAGs from
orders, but in a way that exploits the local, iterative search moves and
scoring.

\subsection{Starting Grow-Shrink from the Previous Parent Set}\label{sec:parentselection}
During the scoring of node-reinsertions, each node's candidate parent set, that is, the nodes coming before it in the order, changes by at most one node being 
inserted or deleted from its prefix.
Consider Algorithm~\ref{algorithm:reinsert} that finds the
best reinsertion for node $v$ currently at position $i$ in order $\tau$.
The possible
reinsertions of $v$ can be efficiently evaluated by
performing a sweep from position $i$ to the right (and also to the left; analogous code omitted),
moving it to
position $i+1$, $i+2$, and so on, by swapping it rightward.
At each step, the prefix of node $v$ increases by exactly one element,
while the prefixes of nodes originally
at positions $i+1$, $i+2$, and so on, lose exactly one node, namely $v$. 

Instead of running grow-shrink from the empty set at every step as in
BOSS, FLOP initializes grow-shrink with the previous parent set, that is, it
continues from the result of grow-shrink for the previous prefix, now with one
additional or one fewer node. The idea behind this strategy is that the parent
set typically changes little when the prefix changes by just one node, and so
this warm start makes parent selection far cheaper. Our implementation is given
in Algorithm~\ref{algorithm:growshrink}. Moreover, our grow-shrink does not
insist on inserting or removing the single best parent with largest
score improvement, making it \emph{non-greedy}; it adds or removes any parent that improves the score, even
if not maximally so. This eliminates the need for complicated grow-shrink tree
caching as used in BOSS. 

We show that the modified grow-shrink with warm start learns the restricted
Markov boundary of a node $v$ with respect to a set $Z$~\citep{lam2022greedy}.
This yields theoretical guarantees that the DAG learned by FLOP is the sparsest
Markovian one for the considered order~\citep{raskutti2018learning}.

\begin{definition}
  Let $P$ be a distribution over $X_1, \dots, X_p$. The \emph{restricted Markov
  boundary} of $X_v$ relative to a set $Z \subseteq \{X_1, \dots, X_p\}\setminus\{X_v\}$,
  denoted by $M(v, Z)$, is defined as a set of nodes $M \subseteq Z$ such that
  % \begin{itemize}
    % \item
    a) $X_v \perp (Z \setminus M) \mid M$
    % \item
    and b)
    there exists no $M'\subset M$ such that $X_v \perp (Z \setminus M') \mid M'$.
  % \end{itemize}
\end{definition}

Under mild assumptions, the Markov boundary is unique~\citep{verma1990causal}.
As in GRaSP~\citep{lam2022greedy} and BOSS~\citep{andrews2023fast}, we learn it
using BIC score improvements in place of conditional independence
tests~\citep{margaritis2003learning}. In the sample limit, the local BIC score
$\ell(X_v, X_{\text{Pa}(v)} \cup \{X_u\})$ is smaller than $\ell(X_v,
X_{\text{Pa}(v)})$ if, and only if, $X_v \not\perp X_u \mid
X_{\text{Pa}(v)}$~\citep{koller2009probabilistic}. We show that this asymptotic
guarantee also carries over to the modified grow-shrink algorithm that starts
from an arbitrary initial parent set instead of the empty set.

\begin{lemma}\label{lemma:modified:gs}
  Let data set $D$ consist of $n$ i.i.d.\ observations of a probability
  distribution represented by a Bayesian network over variables $X_1,
  \dots, X_p$. Then, in the large sample limit of $n$, grow-shrink finds the
  restricted Markov boundary of node $v$ relative to a set $Z \subseteq \{X_1,
  \dots, X_p\} \setminus \{X_v\}$ when started with any initial set $P \subseteq Z$. 
\end{lemma}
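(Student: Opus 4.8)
The plan is to reduce the modified grow--shrink procedure to its conditional-independence oracle form and then re-run the classical correctness argument, checking at every step that it never relies on the empty initial set. By the score characterization recalled above \citep{koller2009probabilistic}, in the large sample limit adding a node $u$ to the current candidate parent set $S$ of $v$ strictly improves the local BIC if and only if $X_v \not\perp X_u \mid X_S$, and removing $u$ from $S$ strictly improves it if and only if $X_v \perp X_u \mid X_{S \setminus \{u\}}$. Hence the FLOP variant --- which repeatedly adds any score-improving node until none remains and then repeatedly removes any score-improving node until none remains --- behaves exactly like the CI-oracle grow--shrink of \citet{margaritis2003learning}, except that it is initialized from an arbitrary $P \subseteq Z$ rather than from $\emptyset$.

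First I would analyze the grow phase. It only adds nodes of $Z$, so it terminates at a set $P_1$ with $P \subseteq P_1 \subseteq Z$ such that $X_v \perp X_u \mid X_{P_1}$ for every $u \in Z \setminus P_1$ (otherwise a further improving addition exists). I would then upgrade these pairwise independences to the joint statement $X_v \perp (Z \setminus P_1) \mid X_{P_1}$, so that $P_1$ satisfies condition (a) of the definition. Second, for the shrink phase I would maintain the invariant that the current set $S$ satisfies condition (a): if $X_v \perp (Z \setminus S) \mid X_S$ and $u \in S$ is removed because $X_v \perp X_u \mid X_{S \setminus \{u\}}$, then the contraction axiom applied with $A = Z \setminus S$, $B = \{u\}$, $C = S \setminus \{u\}$ gives $X_v \perp ((Z \setminus S) \cup \{u\}) \mid X_{S \setminus \{u\}}$, i.e.\ $S \setminus \{u\}$ again satisfies (a). The shrink phase thus terminates at a set $M$ satisfying (a) in which no node is removable, meaning $X_v \not\perp X_u \mid X_{M \setminus \{u\}}$ for all $u \in M$.

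It then remains to verify condition (b), minimality. Suppose some $M' \subsetneq M$ satisfied $X_v \perp (Z \setminus M') \mid X_{M'}$, and pick $u \in M \setminus M'$. Since $\{u\} \cup (M \setminus (M' \cup \{u\})) \subseteq Z \setminus M'$, decomposition yields $X_v \perp (\{u\} \cup (M \setminus (M' \cup \{u\}))) \mid X_{M'}$, and weak union then yields $X_v \perp X_u \mid X_{M \setminus \{u\}}$, contradicting non-removability of $u$. Hence $M$ is a restricted Markov boundary of $v$ relative to $Z$, and under the mild assumptions guaranteeing its uniqueness it is \emph{the} restricted Markov boundary; note also that (a) for $M$ precludes any remaining improving addition, so a single grow-then-shrink pass suffices. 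The point is that none of these steps mentions the initial set, so the warm start $P$ plays no role in correctness --- it may only drive the grow phase to a different but equally valid $P_1$.

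The main obstacle is the upgrade from pairwise to joint conditional independence in the grow-phase step. The semi-graphoid axioms available for arbitrary distributions do not suffice: a parity-type distribution that is Markov to $A \to C \leftarrow B$ with $C = A \oplus B$ makes grow--shrink return $\emptyset$ although the restricted Markov boundary of $C$ relative to $\{A, B\}$ is $\{A, B\}$. This step genuinely requires the composition property of conditional independence, which holds for the distributions of linear Gaussian ANMs considered here and, more generally, for faithful distributions and those satisfying the usual conditions under which the Markov boundary is unique. I would state this assumption explicitly and keep the remainder of the argument at the level of graphoid manipulations, so that the proof is short and clearly isolates where structure beyond the semi-graphoid axioms is used.
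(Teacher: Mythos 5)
Your proof is correct and follows essentially the same route as the paper's, which likewise reduces the score-based tests to CI-oracle grow--shrink (via the cited equivalence between local BIC improvement and conditional dependence) and observes that neither the grow-phase termination condition nor the shrink-phase correctness depends on the initial set. Your write-up is in fact more careful than the paper's: the paper passes from the pairwise independences $X_v \perp X_u \mid P_{\text{grow}}$ to the joint statement $X_v \perp (Z \setminus P_{\text{grow}}) \mid P_{\text{grow}}$ with the single word ``rephrased,'' whereas you correctly isolate this as an application of the composition property, which fails for general distributions (your parity example) and holds exactly under the ``mild assumptions'' the paper invokes for uniqueness of the Markov boundary.
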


\begin{proof}
  This follows directly from the proof of correctness of the grow-shrink
  algorithm in~\citep{margaritis2003learning} and its generalization to
  restricted Markov boundaries in~\citep{lam2022greedy}. Assume that at the end
  of the grow-phase, the current set of parents is $P_{\text{grow}}$. Thus, it
  holds that $X_v \perp X_u \mid P_{\text{grow}}$ for all $X_u \in Z \setminus
  P_{\text{grow}}$, or rephrased $X_v \perp Z \setminus P_{\text{grow}} \mid
  P_{\text{grow}}$. However, this would violate the uniqueness of the Markov
  boundary $M(v, Z)$ if it is not a subset of $P_{\text{grow}}$. This argument
  does not depend on the initial set $P$. The correctness of the shrink-phase
  is unchanged, too. 
\end{proof}

\begin{algorithm}[t]
  \DontPrintSemicolon
  \SetKwFunction{FGrowShrink}{growShrink}
  \SetKwFunction{FGrow}{grow}
  \SetKwFunction{FShrink}{shrink}
  \SetKwFunction{FLocalScore}{localScore}
  \SetKwProg{Fn}{function}{}{end}

  \Fn{\FGrowShrink{$u$, $Z$, $D$, $P_{\text{prev}}$, $\ell_{\text{prev}}$, $\delta$}}{
    \uIf(\tcp*[h]{$\delta$ is the node added to the candidate parents $Z$}){$\delta$ $ > 0$}{
      $P_{\text{new}} \coloneq P_{\text{prev}} \cup \{\delta\}$ \;
      $\ell_{\text{new}} \coloneq$ \FLocalScore{$u$, $P_{\text{new}}$, $D$} \tcp{score with $\delta$ added to parents}
      \lIf(\tcp*[h]{return if no improvement}){$\ell_{\text{new}} < \ell_{\text{prev}}$}{\Return $\ell_{\text{prev}}$, $P_{\text{prev}}$}\label{line:gs:break:add}
    }
    \uElseIf(\tcp*[h]{$|\delta|$ is the node removed from the candidates $Z$}){$\delta < 0$}{
      \lIf(\tcp*[h]{return if $|\delta|$ was no parent}){$|\delta| \not\in P_{\text{prev}}$}{\Return $\ell_{\text{prev}}$, $P_{\text{prev}}$}\label{line:gs:break:rem}
      $P_{\text{new}} \coloneq P_{\text{prev}} \setminus \{|\delta|\}$ \;
      $\ell_{\text{new}} \coloneq$ \FLocalScore{$u$, $P_{\text{new}}$, $D$} \;
    }
    \Else(\tcp*[h]{If no $\delta$ is provided, run from scratch.}){$(P_{\text{new}}, \ell_{\text{new}}) \coloneq (\emptyset,$ \FLocalScore{$u$, $\emptyset$, $D$} $)$ }
    \FGrow{$u$, $P_{\text{new}}$, $\ell_{\text{new}}$, $Z$, $D$} \;
    \FShrink{$u$, $P_{\text{new}}$, $\ell_{\text{new}}$, $Z$, $D$}

  }

  \BlankLine

  \Fn{\FGrow{$u$, $P$, $\ell$, $Z$, $D$}}{
    \Repeat{$P$ is unchanged}{\ForEach{$v \in Z \setminus P$}{
        \lIf{$\ell_{\text{new}} \coloneq$ \FLocalScore{$u$, $P \cup \{v\}$, $D$} $ < \ell$}{ $(\ell, P) \coloneq (\ell_{\text{new}}, P \cup \{v\})$}
    }}
  }
  
  \BlankLine
  
  \lFn(\tcp*[h]{Analogous to grow, thus omitted.}){\FShrink{$u$, $P$, $\ell$, $Z$, $D$}}{
    \dots 
  }
  \caption{Non-greedy grow-shrink with the option to start from a previous parent set $P_{\text{prev}}$.}
  \label{algorithm:growshrink}
\end{algorithm}

In addition to the warm start, we implement another optimization.
We pass the node $v$ that we are either inserting to (coded as $+v$) or removing from (coded as $-v$) the prefixes into grow-shrink.
If $v$ has
been removed and was not part of $P_{\text{prev}}$, we immediately return
$P_{\text{prev}}$. If node $v$ has been inserted to the prefix and does
not increase the score when added to $P_{\text{prev}}$, we again immediately
return $P_{\text{prev}}$. We show that these modifications preserve the guarantees above.
In the sample limit, FLOP returns a Markovian DAG, that is, one that induces no additional conditional independencies. 

\begin{theorem}\label{theorem:markovian}
  Let data set $D$ consist of $n$ i.i.d.\ observations of a probability distribution represented by a Bayesian network over $X_1, \dots, X_p$. In the sample limit of $n$, the CPDAG returned by FLOP is Markovian to $P$.
\end{theorem}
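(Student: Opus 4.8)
The plan is to reduce the statement to Lemma~\ref{lemma:modified:gs} together with the standard characterisation of Markovianity along a topological order. First I would record the ordered local Markov property (see~\citep{raskutti2018learning} for its use in this context): if $\tau$ is a linear order of the nodes and a DAG $G$ has, for every node $v$, parent set $\mathrm{Pa}_G(v)\subseteq\mathrm{pre}_\tau(v)$ with $X_v\perp X_{\mathrm{pre}_\tau(v)\setminus\mathrm{Pa}_G(v)}\mid X_{\mathrm{Pa}_G(v)}$, where $\mathrm{pre}_\tau(v)$ denotes the $\tau$-predecessors of $v$, then $G$ is Markovian to $P$. Comparing this with conditions a) and b) in the definition of the restricted Markov boundary, it suffices to show that in the large-sample limit FLOP terminates with some order $\tau$ and parent sets $P_v = M(v,\mathrm{pre}_\tau(v))$ for every $v$: the returned CPDAG is then the CPDAG of a DAG Markovian to $P$, and since Markovianity is a property of the whole Markov equivalence class this proves the theorem. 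Since every DAG that FLOP evaluates --- across all reinsertion sweeps, local-search restarts, and ILS iterations --- is built in exactly this way, the only thing that needs proof is that each call to the modified grow--shrink routine of Algorithm~\ref{algorithm:growshrink} returns the restricted Markov boundary of its node relative to the current prefix.

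I would prove this through an invariant maintained by Algorithm~\ref{algorithm:reinsert}: before and after every elementary swap, each stored $P_v$ equals $M(v,\mathrm{pre}_\tau(v))$ for the current order $\tau$. The base case is the initialisation loop, which calls grow--shrink from scratch and is therefore the instance $P=\emptyset$ of Lemma~\ref{lemma:modified:gs}. For the induction step, a swap changes the prefixes of exactly two nodes, each by a single element, and re-runs grow--shrink for those nodes warm-started from the previously stored parent set $P_{\mathrm{prev}}$. Whenever no early return is taken, the routine runs its \textsc{grow} and \textsc{shrink} phases starting from a set contained in the new candidate set --- $P_{\mathrm{prev}}\cup\{\delta\}$ for an insertion, $P_{\mathrm{prev}}\setminus\{|\delta|\}$ for a removal (which in the non-shortcut case has $|\delta|\in P_{\mathrm{prev}}$) --- so Lemma~\ref{lemma:modified:gs} directly yields $M(v,\cdot)$ for the updated prefix.

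The crux is the two early returns. For each I would show that running the full modified grow--shrink from $P_{\mathrm{prev}}$ on the updated candidate set performs no move at all, so that its output $P_{\mathrm{prev}}$ already equals the desired restricted Markov boundary by Lemma~\ref{lemma:modified:gs}. In the removal shortcut, the removed node $|\delta|$ is not in $P_{\mathrm{prev}}=M(v,Z)$, so $P_{\mathrm{prev}}$ is still contained in the new candidate set $Z'=Z\setminus\{|\delta|\}$; the \textsc{grow} phase adds nothing because, in the sample limit, it would add $w\in Z'\setminus P_{\mathrm{prev}}\subseteq Z\setminus P_{\mathrm{prev}}$ only if $X_v\not\perp X_w\mid P_{\mathrm{prev}}$, contradicting condition a) for $M(v,Z)$; and the \textsc{shrink} phase removes nothing because removing $w\in P_{\mathrm{prev}}$ would require $X_v\perp X_w\mid P_{\mathrm{prev}}\setminus\{w\}$, which together with condition a) and the contraction property of conditional independence gives $X_v\perp Z\setminus(P_{\mathrm{prev}}\setminus\{w\})\mid P_{\mathrm{prev}}\setminus\{w\}$, contradicting the minimality condition b) for $M(v,Z)$. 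The insertion shortcut is analogous: its trigger condition is exactly the sample-limit statement $X_v\perp X_\delta\mid P_{\mathrm{prev}}$, which together with condition a) for $M(v,Z)$ shows no addition improves the score over $P_{\mathrm{prev}}$ on the enlarged candidate set $Z\cup\{\delta\}$, while the \textsc{shrink} phase again removes nothing by the same minimality argument. This closes the induction; by the first paragraph the returned CPDAG is Markovian to $P$.

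The main obstacle I expect is handling the two shortcut cases cleanly. The key steps are: recasting each early return as ``the full routine would not move'' so Lemma~\ref{lemma:modified:gs} applies verbatim; observing that $P_{\mathrm{prev}}$ stays a subset of the updated candidate set precisely because the removal shortcut fires only when the deleted node is not a parent; and the \textsc{shrink}-phase non-move argument, which combines the contraction and decomposition properties of conditional independence with the sample-limit equivalence between local BIC changes and conditional (in)dependence already used in the paper~\citep{koller2009probabilistic}. All of this goes through under the same regularity hypothesis as Lemma~\ref{lemma:modified:gs} (uniqueness of restricted Markov boundaries); the remaining pieces --- propagating the invariant to the configuration that \textsc{reinsert} actually returns, and the appeal to the ordered local Markov property --- are routine.
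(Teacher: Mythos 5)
Your proposal is correct and follows essentially the same route as the paper's proof: induction over the sequence of grow--shrink calls, with the from-scratch initialization as the base case, Lemma~\ref{lemma:modified:gs} handling every fully executed warm-started run, and the two early returns justified by showing the restricted Markov boundary is unchanged. You simply spell out, via contraction/decomposition of conditional independence, the steps the paper compresses into ``clearly the Markov boundary remains unchanged.''
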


\begin{proof}
  This statement holds assuming that the grow-shrink procedure in FLOP finds a
  Markovian graph for each scored order. As the grow-shrink routines depend on
  the previous runs, we prove this by induction. Initially, a standard
  grow-shrink is run for the starting order (line~\ref{line:boss:growshrink} of
  Algorithm~\ref{algorithm:boss}), which yields parent sets corresponding to
  its sparsest Markovian
  DAG~\citep{raskutti2018learning,lam2022greedy}. Assume that the
  parent sets for the previous order have this property. By
  Lemma~\ref{lemma:modified:gs}, the modified grow-shrink, if run fully, finds
  the restricted Markov boundary with respect to the prefix and thus yields
  parent sets of the sparsest Markovian DAG. It remains to show that the two
  early breaks in lines~\ref{line:gs:break:add} and~\ref{line:gs:break:rem} of
  Algorithm~\ref{algorithm:growshrink} are correct, where the grow-shrink is
  not run. 

  If removed node $\delta$ was not part of the previous Markov boundary,
  clearly the Markov boundary remains unchanged for the reduced prefix, as
  neither the grow nor the shrink phase would add or remove a node. If added
  node $\delta$ does not increase the score for the enlarged prefix, this is the case, too. Thus, by Lemma~\ref{lemma:modified:gs}, the DAG learned by FLOP is Markovian and the statement follows by the fact that the CPDAG of such a DAG is returned.
\end{proof}

We remark that further modifications to FLOP in the subsequent sections do not
change this result. As with BOSS, one can make FLOP asymptotically consistent, provably yielding the true graph in the
sample limit under the faithfulness assumption, by running the backwards phase of
GES~\citep{chickering2002optimal} after termination of the local search.
However, we refrain from this, since FLOP already reaches state-of-the-art
finite-sample performance without it. More generally, any score-based discovery
algorithm can be made asymptotically consistent by running another consistent
algorithm, for example GES or PC, in parallel and returning whichever graph
attains the better score.

\subsection{Dynamic Cholesky Updates}\label{sec:cholesky}
We exploit the local structure of grow-shrink
to avoid recomputing the local score from scratch at every step.
Instead, we update the score from the previous parent set
which at each step changes only by a single
insertion or removal.
This idea is generic and applies to any local search that adds or removes one edge at a time in a local search, not only to order-based methods.

In the multivariate normal setting,
the local BIC score at node $u$ is
$\ell(X_u, X_{\text{Pa}(u)}) = n \log(\widehat{\text{Var}}_D(X_u \mid
X_{\text{Pa}(u)})) + \lambda \ln(n) |\text{Pa}(u)|$. Since the penalty depends
only on the size of the parent set, the work in computing or updating the local
score is in the likelihood term, in the estimated conditional variance of
$X_u$. A direct way to compute this would be to invert a submatrix of the
sample covariance matrix, but it is numerically more stable and faster to avoid
the matrix inversion in favor of using Cholesky factorizations. As shown in
Appendix~\ref{appendix:cholesky:covmat}, the bottom right entry of the Cholesky
factor of the covariance submatrix corresponding to $X_{\text{Pa}(u)}$ and
$X_u$ yields the square root of the conditional variance of $X_u$ given
$X_{\text{Pa}(u)}$. 

Computing the Cholesky decomposition of a $k \times k$ matrix requires
$(1/3)k^3$ floating-point operations. However, as discussed above, the
submatrix which we Cholesky-factorize changes by adding or removing only a
single row and column (corresponding to the added or removed parent node). We
therefore update the Cholesky factor instead of recomputing it, using standard
rank-one update and downdate routines~\citep{gill1974methods,golub2013matrix}.
These updates require $O(k^2)$ floating-point operations, shaving off a factor
$k$ compared to a fresh Cholesky decomposition. This run-time improvement
proves advantageous for larger and denser graphs. These Cholesky updates are
applicable to other score-based causal discovery algorithms, for example GES or
other hill-climbers. To our knowledge, this speedup has not been described in
prior causal discovery work.

\begin{figure}
  \centering
  \begin{tikzpicture}[xscale=0.548, yscale=0.46]
    \input{img/runtime.tikz}
    \node[font=\small] at (9.2528969005743869, 4.50058963282937365) {pre-FLOP (grow-shrink from empty set)};
    \node[font=\small] at (17, 6.48) {pre-FLOP (grow-shrink from previous parents)};
    \node[font=\small, align=center] at (19.46520515335865, 2.45) {FLOP \\ (grow-shrink from previous parents \\ and Cholesky updates)};
    \node[font=\small] at (3.281505664393822, 3.5046652267818578) {BOSS};
  \end{tikzpicture}
  \caption{Run-time in seconds, averaged over 50 repetitions with standard-deviation error bars, for ER graphs with average degree 16, 1000 samples, and $\{50, 100, 150, \dots, 500\}$ nodes.}
  \label{figure:runtime:plot}
\end{figure}

\subsection{Run-time Comparison}

In Figure~\ref{figure:runtime:plot}, we
compare the run-time of FLOP, which includes the two run-time
improvements described in this section, with two ablated versions,
termed \emph{pre-FLOP} in the plot, the first one using neither
optimization (thus only differing from BOSS by using \emph{non-greedy} grow-shrink) and the second one only using the grow-shrink started
at the previous parent set. For reference, we provide the run-time
of the BOSS implementation in the \texttt{Tetrad} software package.
The comparison with Tetrad is not apples-to-apples, since Tetrad is
written in Java and multithreaded, while our code is Rust and
single-threaded. The benchmark uses Erdős-Renyi graphs with average
degree 16, oriented according to a uniformly-random linear order
(details on the simulation setup are provided in
Section~\ref{section:simulations}). Each run has a 30 minute time
limit.

Both the modified grow-shrink and the Cholesky updates yield substantial
run-time reductions. With both optimizations, FLOP is more than a factor 100
faster than BOSS for graphs with 100 nodes and scales to 500 nodes, whereas
BOSS reaches the time limit for instances with 150 nodes. We note that accuracy
of the discovered graphs is similar on these instances for both methods, both
giving good, but not perfect results. In the following, we use the optimized
FLOP and build on these speedups to further improve the quality of the found
graphs.

\section{Improving the Accuracy of Order-Based Search}
This section presents two techniques to improve search accuracy. First, we
replace random initial orders with a principled initial order construction 
putting strongly-correlated nodes next to each other, which is critical on
directed paths in finite samples. Second, we use Iterated Local Search (ILS),
which perturbs a found solution and restarts the local search, trying to escape
local optima through additional compute. With these techniques, FLOP attains
state-of-the-art accuracy in simulations.

\subsection{Initial Order}\label{sec:initialorder}

Path graphs
$x_1 \rightarrow x_2 \rightarrow x_3 \rightarrow \dots \rightarrow x_p$
are challenging instances for order-based methods, which, to
our knowledge, have not been previously discussed in this context before.
On the left of
Figure~\ref{figure:accuracy:plot}, we compare different
algorithms on path graphs with 50 nodes.
FLOP with a random initial order
($\text{FLOP}_0^\text{rand}$), and BOSS are in fact the
worst-performing of all methods. 
A reason for this are far-apart ancestor-descendant pairs with very weak marginal dependence, for which the grow-shrink
procedure may fail to add edges, resulting in non-Markovian DAGs in finite
samples.
For example, if $x_i$ and $x_j$ with $i \ll j$  appear first in the order,
grow-shrink should, irrespective of the remaining order, make $x_i$ a parent of
$x_j$ for it to yield a Markovian DAG since $x_i$ and $x_j$ are marginally
dependent. However, the dependence between $x_i$ and $x_j$ may be too small for
grow-shrink to pick up on in finite samples.

As a remedy, we build the initial order so that strongly correlated nodes are
adjacent, facilitating grow-shrink to find a Markovian graph. To build the
order, we start with the two most correlated nodes and append, at each step, the
variable that can be best explained by variables already placed in the order,
that is, the one with the smallest residual variance when regressed onto the
nodes in the order. We standardize the data beforehand to avoid scale
artefacts. We compute this order efficiently, by iteratively constructing a
Cholesky decomposition of the covariance matrix choosing the next node in the
order according to their residual variance (see
Appendix~\ref{appendix:cholesky:covmat}). On 50-node paths
(Figure~\ref{figure:accuracy:plot}, left), FLOP with this initial order has an
average SHD on-par with PC and GES, the best performing algorithms on these
instances.

\begin{figure}
  \centering
  \begin{tikzpicture}[scale=0.548]
    \input{img/chain_shd.tikz}
    \node[font=\small] at (0.7528969005743869, 5.00058963282937365) {$\text{FLOP}_0^{\text{rand}}$};
    \node[font=\small] at (0.559726785980697, 1.58) {$\text{FLOP}_{0}$};
    \node[font=\small] at (4.26520515335865, 4.1) {BOSS};
    \node[font=\small] at (2.281505664393822, 0.5046652267818578) {PC};
    \node[font=\small] at (4.189976764476, 0.943952483801296) {GES};
    \node[font=\small] at (5.977417823969677, 2.220043196544277) {DAGMA};
  \end{tikzpicture}
  \begin{tikzpicture}[scale=0.548]
    \input{img/dense_shd.tikz}
    \node[font=\small] at (0.55928969005743869, 7.30058963282937365) {$\text{FLOP}_0$};
    \node[font=\small] at (0.699726785980697, 2.3) {$\text{FLOP}_{20}$};
    \node[font=\small] at (1.322739895528542, 1.1082937365010799) {$\text{FLOP}_{100}$};
    \node[font=\small] at (3.222739895528542, 0.0082937365010799) {$\text{FLOP}_{500}$};
    \node[font=\small] at (6.622739895528542, 0.0082937365010799) {Exact};
    \node[font=\small] at (1.71520515335865, 5.1) {$\text{BOSS}_0$};
    \node[font=\small] at (3.60520515335865, 3.05) {$\text{BOSS}_{20}$};
    \node[font=\small] at (7.30520515335865, 2.05) {$\text{BOSS}_{100}$};
  \end{tikzpicture}
  \caption{Run-time plotted against SHD on paths with 50 nodes for 1000 samples (left) and ER graphs with 25 nodes and average degree 16 for 50,000 samples (right). For the path graph, $\text{FLOP}_0$ finds the target graph in 72\% of instances, PC in 32\%, GES in 66\% and the remaining algorithms in none; for the ER graphs, $\text{FLOP}_{20}$ does so in 26\% of cases, $\text{FLOP}_{100}$ in 50\%, $\text{FLOP}_{500}$ in 56\%, Exact in 58\%, $\text{BOSS}_{100}$ in 4\% and the remaining algorithms in none.}
  \label{figure:accuracy:plot}
\end{figure}

\subsection{Iterated Local Search}\label{sec:ils}
Iterated Local Search (ILS) is a classic metaheuristic in discrete
optimization~\citep{lourencco2018iterated} that has been used in previous score-based search over DAGs and CPDAGs~\citep{LiuMetaheuristic2023,nazaret2025extremely}. It is a generic strategy that
combines local search with perturbations to escape local optima: Run local
search to a local optimum, perturb the best solution seen so far, then rerun
local search starting from this perturbation; repeat. In principle, this
procedure can be repeated indefinitely. 

For FLOP, the first local search starts from the initial order constructed as
described in the previous section. After that, the starting order for the
next local search is obtained by perturbing the best-found order by
$k$ random swaps of two (not necessarily adjacent) elements.
The idea being, that
orders near local optima are better
starting points than fully random ones.
We set $k = \ln p$ by default,
which we found to yield robust results balancing moving far enough to escape while staying in a promising basin.

On dense Erdős-Renyi graphs with 25 nodes and an average degree of 16
(Figure~\ref{figure:accuracy:plot}, right), increasing the number
of restarts of the local search (zero restarts amount to one local search, $x$
restarts to $x$ perturbations and new local searches after that), consistently
improves FLOP's accuracy. With 500 restarts, FLOP matches the exact score-based
algorithm while having a faster run-time (the exact score-based algorithm
implements the method by~\cite{silander2006simple} and uses multithreading). We
compare against BOSS with full random restarts, that is $x$ restarts mean $x+1$
independent runs of BOSS and returning the best-found solution. This is
computationally heavy and yields substantially smaller gains than FLOP's ILS
restarts.

ILS is an integral part of the FLOP algorithm. When calling FLOP, the user needs to specify either the number of restarts of the local
search or a time limit, and the solver runs ILS until the budget is exhausted.
This emphasizes the trade-off between run-time and accuracy inherent to
score-based causal discovery, but effectively ignored by the structure learning
community with its focus on one-shot heuristic algorithms.

\section{Simulations}
\label{section:simulations}
We empirically compare FLOP to other causal discovery methods. For
Figure~\ref{figure:default:plot}, we generate Erdős-Renyi (ER) graphs with 50
nodes and average degree 8. We also consider scale-free (SF) graphs with
density parameter $k=4$, generated by starting with a star graph of $k+1$ nodes
and adding further nodes by preferential attachment to $k$ existing nodes, and
DAGs from the bnlearn repository~\citep{scutari2010learning}, such as the Alarm
network~\citep{beinlich1989alarm}. We orient all graphs according to linear
orders drawn uniformly at random. For each graph, we generate 1000 samples from
a linear additive noise model with Gaussian noise (with mean 0 and variance
uniformly drawn from $[0.5, 2.0]$) and edge coefficients drawn uniformly from
$[-1, -0.25] \cup [0.25, 1]$. Each setting is repeated for 50 random instances.

In addition to FLOP and BOSS, we run PC~\citep{spirtes2001causation} as a
classical constraint-based method, GES~\citep{chickering2002optimal} as a
traditional score-based algorithm, and DAGMA~\citep{bello2022dagma} as a
gradient-based continuous optimization method. For BOSS, PC, and GES, we rely
on the implementation in Tetrad~\citep{ramsey18tetrad} through
\texttt{causal-cmd} version 1.12.0, for DAGMA we use version 1.1.0 of the
authors' implementation. The algorithms are run on a machine with 256GB of RAM
and an AMD Ryzen Threadripper 3970 CPU with 32 cores. We make no restrictions
on the number of threads the implementations may use (FLOP only uses a single
thread, whereas the other algorithms exploit multithreading) and report the
wall-clock time of their execution. We use standard parameters in the
literature, setting $\lambda_{\text{BIC}}=2$ for the BIC-based algorithms \citep[for a motivation of a higher penalty parameter than prescribed by the standard BIC, see][]{foygel2010extended}, $\alpha=0.01$ for PC and $\lambda_{\text{DAGMA}}
= 0.02$. As metric of accuracy, we report the Structural Hamming Distance (SHD)
for CPDAGs, that is, the number of node pairs with differing edge relations in
the compared graphs. If a method, such as DAGMA, returns a DAG, we first
compute the corresponding CPDAG and compare this to the CPDAG of the true DAG
(as we generally consider assumptions where only the CPDAG is identifiable).
For some settings, we also report the Ancestor Adjustment Identification
Distance (AID), measuring the mistakes when using the learned instead of the
true CPDAG for the downstream task of causal effect
identification~\citep{henckeladjustment}. For the PC algorithm, which does not
always return a graph satisfying the invariants of CPDAGs, such as acyclicity,
we report the AID only on runs that produced a valid CPDAG.

\begin{figure}
  \centering
  \begin{tikzpicture}[scale=0.548]
    \input{img/sf_shd.tikz}
    \node[font=\small] at (0.659726785980697, 1.98) {$\text{FLOP}_{0}$};
    \node[font=\small] at (2.359726785980697, 0.37) {$\text{FLOP}_{20}$};
    \node[font=\small] at (6.359726785980697, 0.3) {$\text{FLOP}_{100}$};
    \node[font=\small] at (6.00520515335865, 1.4) {BOSS};
    \node[font=\small] at (3.781505664393822, 3.5046652267818578) {PC};
    \node[font=\small] at (6.189976764476, 5.943952483801296) {GES};
    \node[font=\small] at (7.277417823969677, 4.250043196544277) {DAGMA};
  \end{tikzpicture}
  \begin{tikzpicture}[scale=0.548]
    \input{img/alarm_shd.tikz}
    \node[font=\small] at (0.659726785980697, 3.08) {$\text{FLOP}_{0}$};
    \node[font=\small] at (0.659726785980697, 0.67) {$\text{FLOP}_{20}$};
    \node[font=\small] at (2.749726785980697, -0.07) {$\text{FLOP}_{100}$};
    \node[font=\small] at (2.749726785980697, 1.9) {BOSS};
    \node[font=\small] at (4.749726785980697, 1.9) {PC};
    \node[font=\small] at (4.849726785980697, 1.3) {GES};
    \node[font=\small] at (8.249726785980697, 4.7) {DAGMA};
  \end{tikzpicture}
  \caption{Run-time plotted against SHD on SF graphs (left) and the Alarm network, consisting of 37 nodes and 46 edges, (right), both for 1000 samples. For the SF graphs, $\text{FLOP}_{20}$ finds the target CPDAG in 6\% of cases, $\text{FLOP}_{100}$ in 10\%, the remaining algorithms in none; for the Alarm network, $\text{FLOP}_{0}$ does so in 2\% of cases, $\text{FLOP}_{20}$ in 74\%, $\text{FLOP}_{100}$ in 82\%, BOSS in 6\%, GES in 16\%, DAGMA and PC in none.}
  \label{figure:sf:alarm}
\end{figure}

Figure~\ref{figure:default:plot} shows run-time versus SHD (lower left is better). On SF graphs, the order-based algorithms
clearly outperform PC, GES, and DAGMA; FLOP with ILS improves further.
Even with 100 ILS restarts, FLOP's run time is comparable to BOSS.
On the Alarm
network instances, the improvements through ILS are even more apparent, and
with it FLOP obtains near-perfect results.

\begin{figure}
  \begin{tikzpicture}[scale=0.54]
    \input{img/default_bic.tikz}
    \node[font=\small] at (0.550928969005743869, 1.40058963282937365) {$\text{FLOP}_0$};
    \node[font=\small] at (2.029726785980697, 0.60) {$\text{FLOP}_{20}$};
    \node[font=\small] at (5.452739895528542, 0.5002937365010799) {$\text{FLOP}_{100}$};
    \node[font=\small] at (7.40520515335865, 0.60) {BOSS};
    \node[font=\small] at (5.30520515335865, 4.50) {GES};
  \end{tikzpicture}
  \begin{tikzpicture}[scale=0.54]
    \input{img/sf_bic.tikz}
    \node[font=\small] at (0.550928969005743869, 2.10058963282937365) {$\text{FLOP}_0$};
    \node[font=\small] at (2.029726785980697, 0.60) {$\text{FLOP}_{20}$};
    \node[font=\small] at (3.352739895528542, -0.1002937365010799) {$\text{FLOP}_{100}$};
    \node[font=\small] at (6.10520515335865, 0.50) {BOSS};
    \node[font=\small] at (6.00520515335865, 5.00) {GES};
  \end{tikzpicture}
  \caption{Run-time plotted against the BIC difference to the ground-truth graph for ER graphs on the left and SF graphs on the right. For the ER graphs, $\text{FLOP}_{0}$ finds a graph with better or equal BIC score than the true graph in 48\% of cases, $\text{FLOP}_{20}$ and $\text{FLOP}_{100}$ in 84\% of cases, BOSS in 52\% of cases and GES in 0\% cases. For the SF graphs, $\text{FLOP}_{0}$ finds such a graph in 6\% of cases, $\text{FLOP}_{20}$ in 76\% of cases, $\text{FLOP}_{100}$ in 94\% of cases, BOSS in 6\% of cases and GES again in 0\% of cases.}
  \label{figure:bic:ersf}
\end{figure}

Graphs returned by FLOP achieve a lower SHD than competing score-based methods
due to better optimization of the BIC score. This is shown in
Figure~\ref{figure:bic:ersf}, where we report the BIC score difference to the
ground-truth DAG. Generally, the results look qualitatively similar to the SHD
plots for the presented settings. However, for the SF graphs, it can be seen
that the BIC, e.g., for $\text{FLOP}_{20}$ is close to zero, whereas the
SHD for many instances lies clearly above zero. In fact, for a majority of
runs, the BIC score of the graph found by $\text{FLOP}_{20}$ is even (slightly)
better than the BIC score of the ground-truth graph showing that the global BIC
optimum does not identify the ground truth in these cases.

We also evaluated the DAGMA loss function with MLE parameters fitted to
the graph returned by FLOP and observed this to produce a lower loss compared
to the graph and parameters returned by DAGMA itself. This casts doubt on the
idea that gradient-based methods relying on differentiable DAG-constraints have
an inherent advantage in optimizing their target score compared to discrete
search. While these methods may offer other benefits, our results suggest that
those likely come from aspects other than optimization quality.

\section{Discussion}\label{sec:discussion}
In score-based causal discovery, two questions arise: (1) Is the true graph
score-optimal? and (2) Can we find a score-optimal graph? Here we introduce
FLOP, an efficient discrete optimization algorithm to search for graphs
minimizing the Gaussian BIC score. Indeed, when the true graph of a linear ANM
has the globally optimal Gaussian BIC, FLOP typically recovers this graph or
outputs one very close to it. This is further supported by simulations in
Appendix~\ref{appendix:further:benchmarks} including uniform noise,
unstandardized data, data based on an adaptation of the Onion
method~\citep{andrews2024better}, and real-world networks from
bnlearn~\citep{scutari2010learning}. Across these settings, FLOP attains
state-of-the-art accuracy, typically achieving better BIC and lower SHD in a
fraction of the run-time of competing methods. When assumptions are violated
(Appendix~\ref{subsection:nonlinear}-\ref{subsection:sachs}) or sample sizes
are too small for asymptotic guarantees to hold
(Appendix~\ref{subsection:dense} and Figure~\ref{figure:pathfinder} in
Appendix~\ref{subsection:bnlearn}), FLOP still optimizes the Gaussian BIC as
intended and finds graphs with better BIC score than the ground truth, but
graph recovery suffers because the scoring criterion does not identify the true
graph.

These results highlight that it is reasonable to revisit and embrace discrete
search for causal structure learning. FLOP often finds graphs that are
score-optimal or score better than the target graph and makes the link between
accuracy and speed explicit: A computationally efficient search permits more exploration through iterated local search and that leads to better-scoring graphs. Our findings also
recalibrate what is considered hard. First, ER graphs with 50 nodes and about
200 edges are often presented as challenging, yet for linear ANMs order-based
discrete search solves them reliably and quickly. 
Second, on widely used linear benchmarks, the discrete optimization of the Gaussian BIC is feasible, not a bottleneck, and often more efficient and reliable than other optimization methods. Instead, key challenges in causal discovery lie in
designing and selecting appropriate scoring criteria that identify the true
graph as score-optimal not only asymptotically but with high probability also
on finite samples.

At the same time, advancing causal discovery in practice remains difficult even
on small graphs, since the ground truth is rarely known and assumptions are
violated. It has been feasible for decades to find a global BIC optimum with
exact exponential-time search up to roughly 30
variables~\citep{koivisto2004exact,silander2006simple}. FLOP extends strong BIC
optimization to substantially larger graphs, but that does not make the
practical problems go away. Our work shifts the attention away from inflated
combinatorial hardness rhetoric and from a misattributed gap between
asymptotic theory and observed finite-sample performance, toward the immense
challenges causal discovery faces outside of synthetic benchmarks~\citep{reisach2021beware,gobler2024texttt,mogensen2024causal,brouillard2025landscape,gamella2025causal,gururaghavendran2025can,jorgensen2025causal}. 

\subsubsection*{Acknowledgments}
Marcel Wienöbst thanks Kenneth Langedal for fruitful discussions and introducing him to iterated local search. Sebastian Weichwald was supported by a research grant (0069071) from Novo Nordisk Fonden. This research was supported by the Pioneer Centre for AI, DNRF grant number P1.

\bibliography{main}
\bibliographystyle{iclr2026_conference}

\newpage

\appendix

\section{High-Level Description of the FLOP Algorithm}
 As an overview of the FLOP algorithm, we provide Figure~\ref{figure:controlflow}, describing the high-level control flow of FLOP. The algorithm begins by computing an initial order for the given data set $D$. Afterwards, the ILS loop starts with a local search aiming to improve the order $\pi$ through reinsertions. Reinsertions are done by moving a node $v$ to its locally optimal position until no further improvements are possible (Algorithm~\ref{algorithm:boss}). This optimal reinsertion is computed as described in Algorithm~\ref{algorithm:reinsert}, which relies on the grow-shrink described in Algorithm~\ref{algorithm:growshrink} for updating the parents. This grow-shrink starts at the previous parent set and uses efficient Cholesky updates for scoring as described in Section~\ref{sec:cholesky}. 
 
 After the local search completes, the best found graph/order is updated ($\pi_{\text{best}}$ in Figure~\ref{figure:controlflow}) if the score is lower than the previous best. This best-scoring order found thus far is then perturbed as described in Section~\ref{sec:ils} and the perturbed copy is then used as the starting point for the next local search. We note that this procedure ensures that more ILS iterations can only improve the score of $\pi_{\text{best}}$ and thus of the returned graph $G(\pi_{\text{best}})$ because $\pi_{\text{best}}$ is only updated if the local search after the perturbation yields a better scoring order.
 
 We note that the reinsertion-based local search follows the general principle of the BOSS algorithm, which consists of reinserting nodes at their locally optimal position until no further (local) improvements are possible. However, FLOP and BOSS differ in the following key aspects:
 \begin{itemize}
   \item The grow-shrink procedure of BOSS starts with the empty set. Moreover, it is a greedy grow-shrink that always inserts or removes the node with the largest local score improvement. In contrast, FLOP accepts any improving insertion in the grow-, and any improving removal in the shrink-phase. For the implementation of grow-shrink BOSS relies at its core on an intricate data structure called grow-shrink trees, which FLOP avoids. Overall, this allows FLOP to obtain a better run-time performance compared to BOSS.
   \item FLOP uses Cholesky updates for efficient iterative scoring during the local search and, in particular, in the grow-shrink routine. This yields further run-time gains.
   \item FLOP makes use of an iterated local search (ILS) that allows spending more compute for improved BIC optimization. As more ILS restarts can never yield worse scoring graphs, this effectively trades off compute with accuracy. Due to FLOPs run-time improvements this yields a free lunch with regard to accuracy gains.
   \item BOSS starts the local search with a random order. This leads to performance deteriorations on path instances as shown in Section~\ref{sec:initialorder}. In contrast, FLOP explicitly constructs the initial order to avoid such problems.
 \end{itemize}
 
 \begin{figure}
   \centering
   \begin{tikzpicture}
     \node(init) [rectangle, draw, rounded corners=1mm, align=center] at (0,0) {Find initial order \\ cf. Section~\ref{sec:initialorder}};
     \node(ls) [rectangle, draw, rounded corners=1mm, align=center] at (4,0) {Local search \\ cf. Algorithms~\ref{algorithm:boss},~\ref{algorithm:reinsert},~\ref{algorithm:growshrink}};
     \node(update) [rectangle, draw, rounded corners=1mm, align=center] at (9,0) {Update $\pi_{\text{best}}$ to $\pi'$ \\ if $\text{BIC}(\pi') < \text{BIC}(\pi_{\text{best}})$};
     \node(perturb) [rectangle, draw, rounded corners=1mm, align=center] at (4,2.5) {Perturb $\pi_{\text{best}}$ \\ with random swaps};
 
     \node(restart) [align=center] at (7,1.25) {Repeat $k$ times (ILS) \\ cf. Section~\ref{sec:ils}};
 
     \draw[->, semithick, >={[round,sep]Stealth}] (0,1.5) -- (init) node[midway, right]{$D$};
     \draw[->, semithick, >={[round,sep]Stealth}] (init) -- (ls) node[midway,above]{$\pi$};
     \draw[->, semithick, >={[round,sep]Stealth}] (ls) -- (update) node[midway,above]{$\pi'$};
     \draw[->, semithick, >={[round,sep]Stealth}] (perturb) -- (ls) node[midway,right]{$\pi$};
     \draw[->, semithick, >={[round,sep]Stealth}] (update) -- (12.5,0) node[midway,above]{$G(\pi_{\text{best}})$};
     \draw[->, semithick, >={[round,sep]Stealth}] (update) -- (9,2.5) -- (perturb) node[midway,below]{$\pi_{\text{best}}$};
   \end{tikzpicture}
   \caption{Visualization of the general control flow in the FLOP algorithm.}
   \label{figure:controlflow}
 \end{figure}
 
 \section{Cholesky Decomposition of the Covariance Matrix}
 \label{appendix:cholesky:covmat}
 
 Let $X_1,\dots,X_p$ be real-valued centered random variables with finite second moments
 and with full-rank covariance matrix
 $\Sigma=\left[\operatorname{Cov}(X_r, X_c)\right]_{r,c\in \{1,..,p\}}\in\mathbb{R}^{p\times p}$,
 that is, for all $j\in\{1,...,p\}$, $\mathbb{E}(X_j)=0$ and $0 < \Sigma_{j,j}=\operatorname{Var}(X_j) < \infty$;
 further, $\Sigma$ is symmetric positive definite and admits a unique Cholesky factorization $\Sigma=LL^\top$
 with $L$ lower triangular and strictly positive diagonal.
 
 For $j\in\{1,...,p\}$,
 let $\widehat{X}_j$ denote the best linear predictor of $X_j$ from its predecessors $X_1,...,X_{j-1}$,
 that is, the ordinary least squares projection.
 
 Then for all $j\in\{1,...,p\}$,
 \begin{equation*}
 L_{jj}^2
 =
 \operatorname{Var}(X_j-\widehat{X}_j),
 \end{equation*}
 that is,
 $L_{jj}$ is the standard deviation of the least squares residuals
 when linearly regressing $X_j$ onto its predecessors.

 To obtain the statement, fix $j\in\{1,\dots,p\}$.
 Block-partition the leading $j\times j$ principal submatrices of $\Sigma$ and $L$:
 \[
 \Sigma_{1:j,\,1:j}
 =
 \begin{pmatrix}
 \overbrace{\Sigma_{1:(j-1),\,1:(j-1)}}^{\Sigma'}
  &
 \overbrace{\Sigma_{1:(j-1),\,j}}^{s}
  \\[1ex]
 \underbrace{\Sigma_{j,\,1:(j-1)}}_{s^\top}
  &
 \underbrace{\Sigma_{j,j}}_{c}
 \end{pmatrix},
 \qquad
 L_{1:j,\,1:j}
 =
 \begin{pmatrix}
 \overbrace{L_{1:(j-1),\,1:(j-1)}}^{L'}
  & 0
  \\[1ex]
 \underbrace{L_{j,\,1:(j-1)}}_{r}
  & \underbrace{L_{j,j}}_{\ell}
 \end{pmatrix}.
 \]
 
 From $\Sigma=LL^\top$ we get the block identities
 \[
 \Sigma' = L'L'^\top,
 \qquad
  s = L'r^\top,
 \qquad
 c = r r^\top + \ell^2.
 \]
 Since $L'$ is full rank, the second identity gives $r^\top=L'^{-1}s$, hence
 \[
 r r^\top
 = s^\top\!\left(L'^{-\top}L'^{-1}\right)s
 = s^\top\!\left(L'L'^{\top}\right)^{-1}s
 = s^\top\Sigma'^{-1}s.
 \]
 Substituting into $c = rr^\top+\ell^2$ yields
 \[
 \ell^2 \;=\; c - s^\top\Sigma'^{-1}s.
 \]
 
 On the other hand, the centered OLS problem
 \[
 \min_{a\in\mathbb{R}^{j-1}} \; \mathbb{E}\left[(X_j-a^\top X_{1:(j-1)})^2\right]
 \]
 has normal equations $\Sigma'a^\star=s$,
 so $a^\star=\Sigma'^{-1}s$ is the unique minimizer,
 $\widehat{X}_j = s^\top \Sigma'^{-1} X_{1:(j-1)}$,
 and the minimal mean squared error is
 \begin{align*}
 \operatorname{Var}(X_j-\widehat{X}_j)
 &=
 \operatorname{Var}(X_j) + \operatorname{Var}(\widehat{X}_j) - 2 \operatorname{Cov}(X_j, \widehat{X}_j) \\
 &= c
 \;\; + \;\;
 s^\top\Sigma'^{-1} \, \operatorname{Var}( X_{1:(j-1)}) \, \Sigma'^{-1} s
 \;\; - \;\;
 2 s^\top \Sigma'^{-1} \, \operatorname{Cov}(X_j, X_{1:(j-1)}) \\
 &= c \;\; + \;\; s^\top \Sigma'^{-1} \Sigma' \Sigma'^{-1} s \;\; - \;\; 2 s^\top \Sigma'^{-1} s \\
 &= c - s^\top \Sigma'^{-1} s
 \end{align*}
 This equals $\ell^2$, that is, $L_{jj}^2=\operatorname{Var}(X_j-\widehat{X}_j)$, as claimed.
 
 We remark that if $X_1,...,X_p$ are jointly Gaussian, then
 $\operatorname{Var}(X_j-\widehat{X}_j) = \operatorname{Var}(X_j\,|\,X_{1:(j-1)})$.
 
 \section{Further Benchmark Settings}
 \label{appendix:further:benchmarks}
 
 In this section, we consider further benchmark settings to investigate the
 stability of FLOP under different graph and data generation procedures. If not
 specified otherwise, we consider ER graphs with 50 nodes and average degree 8,
 and 1000 samples being drawn from the underlying linear additive noise model.
 We also consider settings where the assumptions of FLOP with a linear Gaussian
 BIC are (potentially) violated, such as uniform noise
 (Subsection~\ref{subsection:uniform}) and non-linear relations
 (Subsection~\ref{subsection:nonlinear}) as well as semi-synthetic
 (Subsection~\ref{subsection:causalAssembly}) and real-world data
 (Subsection~\ref{subsection:sachs}). In some settings, such as the
 uniform-noise case this leads to no performance degradation, whereas in others
 it leads to significantly higher SHDs compared to settings where the
 assumptions are satisfied. We observe that in these settings, FLOP still reliably
 optimizes the BIC, meaning the performance of FLOP in large parts depends on
 how well the scoring criterion is suited to the data. Hence, practitioners need
 to be careful that the assumptions hold when applying FLOP. Moreover, this
 shifts the focus in research from designing optimization algorithms towards the
 development of efficient and practical scoring criteria (see also the discussion in Section~\ref{sec:discussion}).
 
 \subsection{Uniform Noise} \label{subsection:uniform}
 To check the performance of FLOP (using the Gaussian
 BIC to learn the CPDAG underlying a linear ANM) under non-Gaussian noise, we
 generate data with noise sampled uniformly from $[-1, 1]$. As the plot on the
 left of Figure~\ref{figure:uniform:raw} shows, there is no performance
 degradation (of any algorithm). Moreover, we compared the methods to
 DirectLiNGAM~\citep{shimizu2011directlingam}, which is based on identifiability
 theory for non-Gaussian noise. DirectLinGAM gets low SHD on these instances,
 but in the 50 repetitions never recovered the ground truth.
 
 \subsection{Raw Data} \label{subsection:rawdata}
 To avoid varsortability of the
 instances~\citep{reisach2021beware}, we typically standardize the data in the
 benchmarks as mention in Section~\ref{section:simulations}. As an exception, we
 consider instances with unstandardized data on the right of
 Figure~\ref{figure:uniform:raw}. As expected, we find that DAGMA performs
 significantly better than in the standardized settings. The performance of the
 other algorithms does not vary significantly. We also note that in FLOP we
 choose to always standardize the data to obtain a scale-invariant algorithm.
 
 \begin{figure}
   \centering
   \begin{tikzpicture}[scale=0.548]
     \input{img/uniform_shd.tikz}
     \node[font=\small] at (0.50928969005743869, 1.10058963282937365) {$\text{FLOP}_0$};
     \node[font=\small] at (1.839726785980697, 0.38) {$\text{FLOP}_{20}$};
     \node[font=\small] at (4.852739895528542, -0.1082937365010799) {$\text{FLOP}_{100}$};
     \node[font=\small] at (3.30520515335865, 0.81) {BOSS};
     \node[font=\small] at (1.881505664393822, 2.0546652267818578) {PC};
     \node[font=\small] at (3.889976764476, 5.043952483801296) {GES};
     \node[font=\small] at (3.977417823969677, 2.320043196544277) {DAGMA};
     \node[font=\small] at (8.977417823969677, 0.760043196544277) {LiNGAM};
   \end{tikzpicture}
   \begin{tikzpicture}[scale=0.548]
     \input{img/raw_shd.tikz}
     \node[font=\small] at (0.45928969005743869, 1.10058963282937365) {$\text{FLOP}_0$};
     \node[font=\small] at (2.139726785980697, 0.52) {$\text{FLOP}_{20}$};
     \node[font=\small] at (3.852739895528542, -0.1582937365010799) {$\text{FLOP}_{100}$};
     \node[font=\small] at (5.86520515335865, 0.80) {BOSS};
     \node[font=\small] at (3.381505664393822, 3.0546652267818578) {PC};
     \node[font=\small] at (5.289976764476, 5.143952483801296) {GES};
     \node[font=\small] at (4.177417823969677, 1.520043196544277) {DAGMA};
   \end{tikzpicture}
   \caption{Run-time against SHD for data sampled with uniform instead of Gaussian noise on the left and for unstandardized data on the right (both settings are based on ER graphs with 50 nodes and average degree 8 with 1000 samples drawn). In the uniform noise case, $\text{FLOP}_0$ and BOSS find the target CPDAG in 34\% of cases, $\text{FLOP}_{20}$ and $\text{FLOP}_{100}$ in 54\% of the cases, the remaining algorithms in none. On unstandardized data, BOSS finds the target CPDAG in 22\% of cases, $\text{FLOP}_0$ in 34\% and $\text{FLOP}_{20}$ and $\text{FLOP}_{100}$ in 54\% of cases, the remaining algorithms in none.}
   \label{figure:uniform:raw}
 \end{figure}

 \begin{figure}
   \centering
   \begin{tikzpicture}[scale=0.548]
     \input{img/onion_shd.tikz}
     \node[font=\small] at (0.450928969005743869, 3.90058963282937365) {$\text{FLOP}_0$};
     \node[font=\small] at (1.039726785980697, 1.68) {$\text{FLOP}_{20}$};
     \node[font=\small] at (3.252739895528542, 1.6802937365010799) {$\text{FLOP}_{100}$};
     \node[font=\small] at (4.30520515335865, 3.21) {BOSS};
     \node[font=\small] at (6.181505664393822, 4.3546652267818578) {PC};
     \node[font=\small] at (2.889976764476, 4.143952483801296) {GES};
     \node[font=\small] at (7.477417823969677, 5.720043196544277) {DAGMA};
   \end{tikzpicture}
   \begin{tikzpicture}[scale=0.548]
     \input{img/dense_1000_shd.tikz}
     \node[font=\small] at (0.55928969005743869, 7.30058963282937365) {$\text{FLOP}_0$};
     \node[font=\small] at (-0.949726785980697, 5.8) {$\text{FLOP}_{20}$};
     \node[font=\small] at (0.682739895528542, 4.9682937365010799) {$\text{FLOP}_{100}$};
     \node[font=\small] at (1.822739895528542, 4.4082937365010799) {$\text{FLOP}_{500}$};
     \node[font=\small] at (7.122739895528542, 4.5582937365010799) {Exact};
     \node[font=\small] at (1.21520515335865, 6.5) {$\text{BOSS}_0$};
     \node[font=\small] at (3.40520515335865, 5.45) {$\text{BOSS}_{20}$};
     \node[font=\small] at (9.30520515335865, 6.25) {$\text{BOSS}_{100}$};
   \end{tikzpicture}
   % \begin{tikzpicture}[scale=0.548]
   %   \input{img/onion_shd.tikz}
   %   \node[font=\small] at (0.450928969005743869, 3.90058963282937365) {$\text{FLOP}_0$};
   %   \node[font=\small] at (1.039726785980697, 1.68) {$\text{FLOP}_{20}$};
   %   \node[font=\small] at (3.252739895528542, 1.6802937365010799) {$\text{FLOP}_{100}$};
   %   \node[font=\small] at (4.30520515335865, 3.21) {BOSS};
   %   \node[font=\small] at (6.181505664393822, 4.3546652267818578) {PC};
   %   \node[font=\small] at (2.889976764476, 4.143952483801296) {GES};
   %   \node[font=\small] at (7.477417823969677, 5.720043196544277) {DAGMA};
   % \end{tikzpicture}
   % \begin{tikzpicture}[scale=0.548]
   %   \input{img/dense_bic.tikz}
   %   \node[font=\small] at (0.55928969005743869, 7.30058963282937365) {$\text{FLOP}_0$};
   %   \node[font=\small] at (-0.949726785980697, 5.8) {$\text{FLOP}_{20}$};
   %   \node[font=\small] at (0.682739895528542, 4.9682937365010799) {$\text{FLOP}_{100}$};
   %   \node[font=\small] at (1.822739895528542, 4.4082937365010799) {$\text{FLOP}_{500}$};
   %   \node[font=\small] at (7.122739895528542, 4.5582937365010799) {Exact};
   %   \node[font=\small] at (1.21520515335865, 6.5) {$\text{BOSS}_0$};
   %   \node[font=\small] at (3.40520515335865, 5.45) {$\text{BOSS}_{20}$};
   %   \node[font=\small] at (9.30520515335865, 6.25) {$\text{BOSS}_{100}$};
   % \end{tikzpicture}
   \caption{Run-time against SHD for data sampled with the DAG-adaptation of the Onion method on the left (again on ER graphs with 50 nodes and average degree 8 with 1000 samples drawn) and for dense ER graphs (25 nodes, average degree 16) with 1000 samples generated in the standard way on the right. In both settings, none of the algorithms ever recover the target CPDAG.}
   \label{figure:onion:dense}
 \end{figure}
 
 \subsection{DaO Data} We also consider the DAG-adaptation of the Onion
 method~\citep{andrews2024better} as a way to sample data from an ANM. This
 method has been proposed to avoid artefacts in the data, such as
 R2-sortability~\citep{reisach2023sorting}, which could be inadvertently or
 explicitly exploited to game benchmarks.
 In line with the results by~\cite{andrews2024better}, we find that this
 sampling method yields harder-to-identify instances, with FLOP nonetheless
 performing best; however, all methods produce SHDs greater than 50, as shown
 in the left panel of Figure 8. This may be caused by weak causal relationships
 or (near)-faithfulness violations in the data. It is, however, not a failure
 in the optimization, as we observed that, for FLOP and other score-based
 algorithms, the BIC score of the learned graph was better than the one of the
 ground truth, suggesting non-identifiability of the true CPDAG under the BIC
 for the provided number of samples.
 
 \begin{figure}
   \centering
   \begin{tikzpicture}[scale=0.54]
     \input{img/large_accuracy_250.tikz}
     \node[font=\small] at (0.450928969005743869, 0.90058963282937365) {$\text{FLOP}_0$};
     \node[font=\small] at (2.329726785980697, 0.08) {$\text{FLOP}_{20}$};
     \node[font=\small] at (4.752739895528542, 0.7002937365010799) {$\text{FLOP}_{100}$};
     \node[font=\small] at (6.90520515335865, 0.70) {BOSS};
     \node[font=\small] at (0.381505664393822, 5.3546652267818578) {PC};
     \node[font=\small] at (2.189976764476, 3.443952483801296) {GES};
   \end{tikzpicture}
   \begin{tikzpicture}[scale=0.54]
     \input{img/large_accuracy_500.tikz}
     \node[font=\small] at (0.450928969005743869, 0.80058963282937365) {$\text{FLOP}_0$};
     \node[font=\small] at (2.939726785980697, 0.68) {$\text{FLOP}_{20}$};
     \node[font=\small] at (8.652739895528542, 0.6802937365010799) {$\text{FLOP}_{100}$};
     \node[font=\small] at (2.401505664393822, 6.8546652267818578) {PC};
     \node[font=\small] at (2.419976764476, 2.143952483801296) {GES};
   \end{tikzpicture}
   \caption{Run-time against SHD for ER graphs with 250 nodes on the left and with 500 nodes on the right (average degree 8 and 1000 samples drawn). BOSS times out on the latter instances. In both settings, none of the algorithms ever recover the target CPDAG.}
   \label{figure:er:large}
 \end{figure}

 \subsection{Dense ER Graphs} \label{subsection:dense}
 In the main paper, we considered dense ER graphs
 (25 nodes and average degree 16) in a setting with 50,000 samples. Due to the
 denseness of the graph such a large amount of samples is necessary to identify
 the target graph. Here, we show the performance of the algorithms for
 significantly fewer samples, namely 1000 samples, as in the other simulations.
 As can be seen on the right of Figure~\ref{figure:onion:dense}, FLOP still
 performs quite well, however, the algorithms are much closer with regard to the SHD. 
 
 We note that we again compared the BIC score of the graph returned by FLOP with
 the ground-truth graph as well as the other algorithms. We found that FLOP and the exact algorithm consistently found graphs with a better BIC score than the ground truth or the other discovery algorithms' output graphs. This illustrates that the sample size is too small for the BIC score to reliably identify the true CPDAG.
 
 Another thing to note is that compared to the setting with 50000 samples in the
 main text, both BOSS and FLOP run faster on instances with 1000 samples,
 whereas there is no noticeable difference for the exact algorithm. The reason
 for this increased run-time for larger samples sizes is that the BIC penalizes
 edges stronger for smaller sample sizes with the penalty term growing with $\ln
 n$ and the likelihood term proportional with $n$. Thus, intermediate graphs in
 the search are typically denser for high-sample settings, which increases the
 computational effort. 
 
 \begin{figure}
   \centering
   \begin{tikzpicture}[scale=0.54]
     \input{img/mildew_shd.tikz}
     \node[font=\small] at (0.450928969005743869, 3.80058963282937365) {$\text{FLOP}_0$};
     \node[font=\small] at (0.629726785980697, 0.55) {$\text{FLOP}_{20}$};
     \node[font=\small] at (3.352739895528542, -0.1002937365010799) {$\text{FLOP}_{100}$};
     \node[font=\small] at (5.30520515335865, 2.40) {BOSS};
     \node[font=\small] at (4.701505664393822, 1.2746652267818578) {PC};
     \node[font=\small] at (2.989976764476, 2.243952483801296) {GES};
     \node[font=\small] at (8.989976764476, 5.043952483801296) {DAGMA};
   \end{tikzpicture}
   \begin{tikzpicture}[scale=0.54]
     \input{img/barley_shd.tikz}
     \node[font=\small] at (0.450928969005743869, 3.50058963282937365) {$\text{FLOP}_0$};
     \node[font=\small] at (1.029726785980697, 0.75) {$\text{FLOP}_{20}$};
     \node[font=\small] at (2.852739895528542, -0.2002937365010799) {$\text{FLOP}_{100}$};
     \node[font=\small] at (5.20520515335865, 2.10) {BOSS};
     \node[font=\small] at (3.101505664393822, 2.0046652267818578) {PC};
     \node[font=\small] at (2.989976764476, 4.643952483801296) {GES};
     \node[font=\small] at (8.989976764476, 4.843952483801296) {DAGMA};
   \end{tikzpicture}
   \caption{Run-time against SHD for the Mildew network on the left~\citep{jensenmildew}, which consists of 35 nodes and 46 edges, and the Barley network on the right~\citep{scutari2010learning}, which consists of 48 nodes and 84 edges. For the Mildew network, BOSS finds the target CPDAG in 2\% of cases, $\text{FLOP}_{20}$ in 48\% and $\text{FLOP}_{100}$ in 52\% of the cases, the remaining algorithms in none. For the Barley network, GES finds the ground truth in 4\% of cases, $\text{FLOP}_0$ in 8\%, BOSS finds the target CPDAG in 12\%, $\text{FLOP}_{20}$ in 90\% and $\text{FLOP}_{100}$ in 94\% of cases, PC and DAGMA in none.}
   \label{figure:mildew:barley}
 \end{figure}
 
 \subsection{Large ER graphs} 
 We also report the accuracy for large ER graphs with 250 and 500 nodes and average degree 8 in Figure~\ref{figure:er:large}. Here, DAGMA does not terminate within the time limit for either instances and BOSS does not for the graphs with 500 nodes. Overall, similar accuracy results as before can be observed though notably PC appears to get worse with an increased number of variables in comparison with GES. 
 
 \begin{figure}
   \begin{tikzpicture}[scale=0.54]
     \input{img/pathfinder_shd.tikz}
     \node[font=\small] at (0.350928969005743869, 5.70058963282937365) {$\text{FLOP}_0$};
     \node[font=\small] at (2.029726785980697, 4.90) {$\text{FLOP}_{20}$};
     \node[font=\small] at (6.352739895528542, 5.6002937365010799) {$\text{FLOP}_{100}$};
     \node[font=\small] at (8.50520515335865, 4.60) {BOSS};
   \end{tikzpicture}
   \begin{tikzpicture}[scale=0.54]
     \input{img/pathfinder_bic.tikz}
     \node[font=\small] at (0.450928969005743869, -3.10058963282937365) {$\text{FLOP}_0$};
     \node[font=\small] at (1.629726785980697, -5.60) {$\text{FLOP}_{20}$};
     \node[font=\small] at (6.352739895528542, -5.8002937365010799) {$\text{FLOP}_{100}$};
     \node[font=\small] at (8.55520515335865, -3.09) {BOSS};
   \end{tikzpicture}
   \caption{Run-time against SHD for the Pathfinder network, which consists of 109 nodes and 195 edges, on the left. On the right, the BIC score difference compared to the ground-truth graph for the Pathfinder network. None of the algorithms ever recover the target CPDAG.}
   \label{figure:pathfinder}
 \end{figure}

 \subsection{bnlearn graphs} \label{subsection:bnlearn}
 In addition to the random graphs, we also consider real-world networks from the
 bnlearn repository, namely the Mildew~\citep{jensenmildew}, Barley and the
 Pathfinder network~\citep{heckerman1992toward}. All three are too large such
 that exact score-based algorithm based on dynamic programming could be used,
 with Mildew consisting of 35 nodes and 46 edges, Barley of 48 nodes and 84
 edges, and Pathfinder of 109 nodes and 195 arcs. In all cases, we generate the
 data synthetically in the same manner as before. For Mildew and Barley on the
 left and right of Figure~\ref{figure:mildew:barley}, FLOP performs
 significantly better than other methods and, in particular, that the ILS is
 needed to get close-to-perfect accuracy on these instances. For Pathfinder on
 the left of Figure~\ref{figure:pathfinder}, PC, GES and DAGMA do not terminate within the time limit of 30
 minutes. Here, $\text{FLOP}_0$ and BOSS yield roughly similar SHD. However,
 with an increasing number of ILS iterations, the SHD gets worse for
 $\text{FLOP}_{20}$ and $\text{FLOP}_{100}$. To analyze this behaviour further,
 we show the BIC score difference to the ground-truth DAG on the right of
 Figure~\ref{figure:pathfinder}. Indeed, all reported methods yield better
 BIC scores than the true DAG and ILS does find even better-scoring graphs,
 which, in this case, are further from the ground truth. Again, faithfulness
 violations promoted by the underlying graph structure may be the issue here,
 even though closer investigations are needed.
 
 \subsection{Non-linear data} \label{subsection:nonlinear}
 As settings where the linear Gaussian BIC is misspecified, we consider
 non-linear data generated from a randomly initialized multi-layer perceptron
 (MLP) with a single hidden layer of size 100 and sigmoid activation, as
 described in Appendix~C.2.2 in~\citep{bello2022dagma} and from sampled Gaussian
 process regressions with a unit bandwidth RBF kernel as proposed
 in~\citep{rolland22a}. In both settings, the ground-truth DAG is generated by
 orienting an ER graph with 25 nodes and average degree 4, thus containing on
 average 50 edges, according to a linear order that is drawn uniformly at
 random. We consider the same algorithms as before
 with the same parameter choice and score. They are hence not tuned towards the
 non-linear setting. Additionally, we include the non-linear DAGMA algorithm
 from~\cite{bello2022dagma}. In Figure~\ref{figure:nonlinear}, we plot the SHD
 of each method contrasted with the BIC difference to the optimal BIC score
 (for $\lambda_\text{BIC} = 2$) for each of the algorithms (cases where PC does
 not return a valid CPDAG are omitted). As can be seen, the BIC optimum does
 not correspond to low SHD in both settings with GES, BOSS and
 $\text{FLOP}_{100}$ having similar performance, and the ground truth having
 suboptimal BIC scores. For the MLP setting, the non-linear version of DAGMA is
 the best method for graph recovery, however, in the GP setting it is not
 better than the other approaches. It is also the by far slowest method, taking
 over 5 minutes per instance. The assumptions of the LiNGAM algorithm are also
 violated by the non-linearities and it is clearly the worst-performing
 algorithm among the presented ones.
 
 \begin{figure}
   \begin{tikzpicture}[scale=0.54]
     \input{img/nonlinear_mlp.tikz}
     \node[font=\small] at (0.650928969005743869, 1.20058963282937365) {$\text{FLOP}_{100}$};
     \node[font=\small] at (0.86520515335865, 1.805) {BOSS};
     \node[font=\small] at (0.20520515335865, 2.65) {GES};
     \node[font=\small] at (3.15520515335865, 2.750) {PC};
     \node[font=\small] at (3.402739895528542, 0.6002937365010799) {$\text{DAGMA}_{\text{non-linear}}$};
     \node[font=\small] at (3.602739895528542, 1.48002937365010799) {$\text{DAGMA}_{\text{linear}}$};
     \node[font=\small] at (2.379726785980697, 4.7) {LiNGAM};
     \node[font=\small] at (2.179726785980697, 0.0) {true};
   \end{tikzpicture}
   \begin{tikzpicture}[scale=0.54]
     \input{img/nonlinear_gp.tikz}
     \node[font=\small] at (0.650928969005743869, 3.73058963282937365) {$\text{FLOP}_{100}$};
     \node[font=\small] at (0.95520515335865, 4.405) {BOSS};
     \node[font=\small] at (0.20520515335865, 5.25) {GES};
     \node[font=\small] at (1.85520515335865, 4.950) {PC};
     \node[font=\small] at (5.582739895528542, 4.5802937365010799) {$\text{DAGMA}_{\text{non-linear}}$};
     \node[font=\small] at (7.402739895528542, 5.18002937365010799) {$\text{DAGMA}_{\text{linear}}$};
     \node[font=\small] at (3.379726785980697, 6.2) {LiNGAM};
     \node[font=\small] at (5.879726785980697, 0.0) {true};
   \end{tikzpicture}
   \caption{BIC score difference to the BIC optimum plotted against SHD for non-linear data generated based on MLPs with a single hidden layer on the left. On the right, non-linearities are generated from sampled Gaussian process regressions with a unit bandwidth RBF kernel. Both settings use ER graphs with 25 nodes and average degree 4, thus the ground truth contains 50 edges on average. In the MLP setting, $\text{FLOP}_{100}$ finds the BIC optimum in 44\% of cases, BOSS finds it in 12\% of cases and GES in 4\% of cases. In the GP setting, $\text{FLOP}_{100}$ finds the BIC optimum in 72\% of cases, BOSS in 16\% of cases and GES in 30\% of cases.}
   \label{figure:nonlinear}
 \end{figure}

 \subsection{causalAssembly data set} \label{subsection:causalAssembly}
 We show the results on the causalAssembly dataset introduced by~\cite{gobler2024texttt} in Figure~\ref{figure:causalAssembly:shdbic}. The ground-truth DAG consists of 98 nodes and 485 edges. We subsample 5000 observations with replacement from the data set 50 times and run the algorithms on this subsampled data. We exclude DAGMA and LiNGAM from the plots as they yield significantly larger SHD, which lies above 550, and take much longer than the competing algorithms, namely more than 30 seconds in the case of DAGMA and more than 200 seconds in the case of LiNGAM. The remaining algorithms return results of similar quality, with notable improvements through the ILS restarts that FLOP uses. These small improvements stem from better BIC score optimization as shown in the right plot. Here, the BIC difference to the true graph is reported and it is clear that all methods return graphs with much better BIC scores than that of the ground truth, suggesting score misspecification. 
 
 \begin{figure}
   \begin{tikzpicture}[scale=0.54]
     \input{img/causalAssembly_shd.tikz}
     \node[font=\small] at (0.450928969005743869, 5.40058963282937365) {$\text{FLOP}_0$};
     \node[font=\small] at (2.379726785980697, 5.1) {$\text{FLOP}_{20}$};
     \node[font=\small] at (9.152739895528542, 5.45002937365010799) {$\text{FLOP}_{100}$};
     \node[font=\small] at (5.50520515335865, 5.60) {BOSS};
     \node[font=\small] at (4.50520515335865, 6.90) {GES};
     \node[font=\small] at (7.10520515335865, 6.550) {PC};
   \end{tikzpicture}
   \begin{tikzpicture}[scale=0.54]
     \input{img/causalAssembly_bic.tikz}
     \node[font=\small] at (0.450928969005743869, -6.20058963282937365) {$\text{FLOP}_0$};
     \node[font=\small] at (2.529726785980697, -6.40) {$\text{FLOP}_{20}$};
     \node[font=\small] at (8.552739895528542, -6.5002937365010799) {$\text{FLOP}_{100}$};
     \node[font=\small] at (6.00520515335865, -5.30) {GES};
     \node[font=\small] at (4.650520515335865, -7.22) {BOSS};
   \end{tikzpicture}
   \caption{Run-time against SHD for the causalAssembly data on the left. On the right, the BIC score difference to the BIC of the ground truth is reported.  LiNGAM and DAGMA are not included because both are considerably slower on these instances (with DAGMA needing more than 30 seconds per instance and LiNGAM more than 200 seconds) and obtain significantly worse SHD compared to the other methods, typically above 550 for both algorithms. As a point-of-reference, the ground-truth graphs consists of 485 edges, thus these two methods give worse SHDs than the empty graph. All methods optimizing the BIC, shown in the right graph, yield BIC scores clearly lower than that of the true graph, indicating score misspecification for the linear Gaussian BIC.}
   \label{figure:causalAssembly:shdbic}
 \end{figure}
 
 \begin{figure}
   \begin{tikzpicture}[scale=0.54]
     \input{img/default_lambda.tikz}
   \end{tikzpicture}
   \begin{tikzpicture}[scale=0.54]
     \input{img/default_perturb.tikz}
   \end{tikzpicture}
   \caption{SHD for $\text{FLOP}_{20}$ with different choices of $\lambda_{\text{BIC}}$ on the left (ER graphs with 50 nodes and average degree 8) and with different factors $x$ controlling the number of swaps in a perturbation on the right (ER graphs with 50 nodes and average degree 16). For the $\lambda_{\text{BIC}}$, as explained by the derivation of the extended BIC~\citep{foygel2010extended}, values higher than $1$ are needed on finite samples with $2$ being a common choice. For the perturbations, it can be seen that many choices for the number of random swaps are effective (the exception being no perturbations, and thus no ILS at all, which is shown at $x=0$, yielding an SHD in the hundreds for this setting), with outliers increasing for too few or too many swaps.}
   \label{figure:hyperparameters}
 \end{figure}
 
 \subsection{Sachs data set} \label{subsection:sachs}
 We also evaluate the algorithms on the Sachs dataset~\citep{sachs2005causal}, which consists of 11 nodes, and compute the SHD with regard to the CPDAG of the ground truth consisting of 17 edges. We run each algorithm using the same hyperparameters as before on 50 bootstrap samples of the 853 observations in the data set. As result, we observed $\text{FLOP}_{20}$, BOSS and GES performing on par, all yielding an average SHD of $12.58$. The other algorithms yield similar results, with DAGMA having the best performance with an average SHD of $11.7$. The PC algorithm obtains an average SHD of $12.56$ and LiNGAM an average SHD of $14.18$. 
 
 \subsection{Results for different parameters choices}
 FLOP has two parameters that need to be chosen by the user. First, $\lambda_{\text{BIC}}$ scales the penalty term of the BIC and, second, the number of ILS restarts control the amount of compute that is invested. For the latter parameter, we have typically shown the simulation results for multiple choices, such as $\text{FLOP}_{0}$, when no restarts are performed, as well as $\text{FLOP}_{20}$ and $\text{FLOP}_{100}$ with 20 and a 100 restarts, respectively. We also note that more ILS iterations can only improve the BIC optimization. 
 
 For $\lambda_{\text{BIC}}$ on the other hand, we choose the value $2$, which is the standard setting from the literature. As~\cite{foygel2010extended} have shown, a larger value than $1$ should be chosen to recover the structure of graphical models, while any constant value guarantees asymptotic consistency. The results on the left of Figure~\ref{figure:hyperparameters} confirm this, showing that for $\lambda_{\text{BIC}}$ larger or equal to $2$, graphs close to the ground truth are recovered by $\text{FLOP}_{20}$, while smaller choices of $\lambda_{\text{BIC}}$ yields spurious edges and thus a higher SHD.
 
 Finally, for the ILS perturbations, FLOP defaults to $\ln p$ many random swaps. The software interface of FLOP does not allow tuning this parameter, as we found it to be a stable default choice. This is confirmed on the right of Figure~\ref{figure:hyperparameters}, which runs FLOP with $x \cdot \ln p$ many random swaps for $x \in \{0, 1/4, 1/2, 3/4, 1, 4/3, 2, 4\}$. In the case that $x$ is set to zero, which corresponds to not running ILS, this yields an SHD that is often in the hundreds. Conversely, any of the positive choices of $x$ lead to good performance. The best results are obtained for $x$ between $3/4$ and $2$, while for the largest and smallest values of $x$ the number of outliers increases.
 
 \subsection{Ancestor Adjustment Distance}
 We show the Ancestor Adjustment Identification Distance (AID) as another metric
 for evaluating the learned graphs~\citep{henckeladjustment}. It effectively
 counts the number of mistakes one would make if one used the learned graph to
 select valid adjustment sets (using the ancestors of a node) instead of the
 ground-truth graph. Figure~\ref{figure:aid:first} shows the AIDs for a selection of the previous simulation results. We note that we omit data points where PC does not return a CPDAG (as is well-known to happen on finite samples). For example, on ER graphs with 500 nodes, the PC algorithm does not yield a single valid CPDAG.

\subsection{Other Gradient-Based Algorithms}
Due to the choice of the least-squares loss function, for DAGMA and other popular gradient-based methods, one implicitly assumes perfect varsortability (or even equal noise variances in the underlying linear ANM) to recover the underlying DAG as the unique score-optimal DAG~\citep{peters2014identifiability,reisach2021beware,reisach2023sorting}. Because we compare the methods on \emph{standardized} data in our simulations (with the execption of Subsection~\ref{subsection:rawdata}), this assumption is violated. In comparison, the GOLEM algorithm proposed by~\cite{ng2020role} has a version that is developed specifically for the non-equal noise variance case. We ran this algorithm available in the \texttt{gCastle} package (version 1.0.4) with default parameters on the standard setting in this work (ER graphs with 50 nodes and average degree 8) and observed an average SHD of $176.6$. This is slightly better than DAGMA with an SHD of $195.08$ (which may also be partially due to different hyperparameter choices), but clearly worse than $\text{FLOP}_{100}$ with an average SHD of $1.34$. The GOLEM algorithm also had a run-time of more than 200 seconds per instances, whereas running $\text{FLOP}_{100}$ took around 5 seconds per instances.
 
 \begin{figure}
   \centering
   \begin{tikzpicture}[scale=0.52]
     \input{img/sf_aid.tikz}
     \node[font=\small] at (0.450928969005743869, 3.90058963282937365) {$\text{FLOP}_0$};
     \node[font=\small] at (1.329726785980697, 0.55) {$\text{FLOP}_{20}$};
     \node[font=\small] at (5.052739895528542, 0.4500937365010799) {$\text{FLOP}_{100}$};
     \node[font=\small] at (5.70520515335865, 2.40) {BOSS};
     \node[font=\small] at (2.931505664393822, 4.5746652267818578) {PC};
     \node[font=\small] at (4.409976764476, 6.883952483801296) {GES};
     \node[font=\small] at (8.489976764476, 4.843952483801296) {DAGMA};
   \end{tikzpicture}
   \begin{tikzpicture}[scale=0.52]
     \input{img/onion_aid.tikz}
     \node[font=\small] at (0.450928969005743869, 3.75058963282937365) {$\text{FLOP}_0$};
     \node[font=\small] at (1.259726785980697, 5.05) {$\text{FLOP}_{20}$};
     \node[font=\small] at (2.652739895528542, 3.9002937365010799) {$\text{FLOP}_{100}$};
     \node[font=\small] at (4.60520515335865, 4.40) {BOSS};
     \node[font=\small] at (4.801505664393822, 6.1746652267818578) {PC};
     \node[font=\small] at (2.089976764476, 6.043952483801296) {GES};
     \node[font=\small] at (7.789976764476, 5.043952483801296) {DAGMA};
   \end{tikzpicture}
   \vskip 1em
   \begin{tikzpicture}[scale=0.52]
     \input{img/large_accuracy_250_aid.tikz}
     \node[font=\small] at (0.450928969005743869, 1.10058963282937365) {$\text{FLOP}_0$};
     \node[font=\small] at (2.429726785980697, 0.05) {$\text{FLOP}_{20}$};
     \node[font=\small] at (4.852739895528542, 0.8502937365010799) {$\text{FLOP}_{100}$};
     \node[font=\small] at (7.00520515335865, 1.20) {BOSS};
     \node[font=\small] at (1.001505664393822, 6.0946652267818578) {PC};
     \node[font=\small] at (2.139976764476, 3.843952483801296) {GES};
   \end{tikzpicture}
   \begin{tikzpicture}[scale=0.52]
     \input{img/large_accuracy_500_aid.tikz}
     \node[font=\small] at (0.450928969005743869, 1.20058963282937365) {$\text{FLOP}_0$};
     \node[font=\small] at (3.529726785980697, 0.25) {$\text{FLOP}_{20}$};
     \node[font=\small] at (8.652739895528542, 0.8002937365010799) {$\text{FLOP}_{100}$};
     \node[font=\small] at (2.489976764476, 2.543952483801296) {GES};
   \end{tikzpicture}
   \vskip 1em
   \begin{tikzpicture}[scale=0.52]
     \input{img/alarm_aid.tikz}
     \node[font=\small] at (0.450928969005743869, 3.00058963282937365) {$\text{FLOP}_0$};
     \node[font=\small] at (1.229726785980697, 0.85) {$\text{FLOP}_{20}$};
     \node[font=\small] at (2.552739895528542, -0.1002937365010799) {$\text{FLOP}_{100}$};
     \node[font=\small] at (5.30520515335865, 2.00) {BOSS};
     \node[font=\small] at (4.471505664393822, 1.1746652267818578) {PC};
     \node[font=\small] at (2.989976764476, 2.943952483801296) {GES};
     \node[font=\small] at (8.989976764476, 4.643952483801296) {DAGMA};
   \end{tikzpicture}
   \begin{tikzpicture}[scale=0.52]
     \input{img/pathfinder_aid.tikz}
     \node[font=\small] at (0.450928969005743869, 3.60058963282937365) {$\text{FLOP}_0$};
     \node[font=\small] at (1.729726785980697, 5.7) {$\text{FLOP}_{20}$};
     \node[font=\small] at (6.352739895528542, 5.3002937365010799) {$\text{FLOP}_{100}$};
     \node[font=\small] at (6.40520515335865, 4.00) {BOSS};
   \end{tikzpicture}
   \caption{Run-time and AID for SF graphs (top left), data sampled by the DAG-adaptation of the Onion method (top right), ER graphs with 250 nodes (center left), ER graphs with 500 nodes (center right), the Alarm network (bottom left) and the Pathfinder network (bottom right).}
   \label{figure:aid:first}
 \end{figure}

\end{document}